\documentclass{IEEEtran}

\usepackage{graphicx} 
\usepackage{amsfonts}
\usepackage{amssymb}
\usepackage{amsthm}
\usepackage{mathtools}
\usepackage{bbm}

\usepackage{amsmath,amsthm,amssymb,mathrsfs} 

 \usepackage{amsmath,amsthm,bbm,amssymb}

\DeclareMathOperator{\logg}{\log\log}

\def\R{\mathbb{R}}

\def\cH{\mathcal{H}}

\def\cM{\mathcal{M}}

\def\cX{\mathcal{X}}

\newcommand{\E}{\mathbb{E}} 

\newcommand{\given}{\;|\;}
\newcommand{\mean}[1] {\E\left\{{#1}\right\}}

\newcommand{\ind}{\boldsymbol{\mathbbm{1}}} 
\newcommand{\indf}[1]{\ind\set{#1}} 

\newcommand{\set}[1]{\left\{#1\right\}}

\newcommand{\param}[1]{\left(#1\right)}
\newcommand{\abs}[1] {\left| {#1}\right|}

\newcommand{\prob}[1]{\mathbb{P}\left(#1\right)}

\newcommand{\cprob}[2]{\mathbb{P}\left(#1\given #2\right)} 

\newcommand{\eps}{\epsilon}

\providecommand{\setthms}[1]{#1}
\setthms{
\newtheorem{lem}{Lemma}[section]
\newtheorem{thm}[lem]{Theorem}

\theoremstyle{definition}

}

\newcommand{\ninf}{n\to\infty}

\newcommand{\fmax}{f_{\max}}
\newcommand{\fmin}{f_{\min}}

\newcommand{\limninf}{\lim_{\ninf}}

\newcommand{\bs}{\backslash}

\numberwithin{equation}{section}

\def\bsplit#1\esplit{\begin{split} #1 \end{split} }
\def\splitb#1\splite{\begin{split} #1 \end{split} }
\def\beq#1\eeq{\begin{equation} #1 \end{equation}}
\def\eqb#1\eqe{\begin{equation} #1 \end{equation}}

\usepackage[pagebackref,breaklinks,colorlinks]{hyperref}
\usepackage{xfrac,multirow}
\usepackage{authblk}
\usepackage{subcaption}

\title{A Universal Nearest-Neighbor Estimator for Intrinsic Dimensionality }
\author[1]{Eng-Jon Ong}
\author[1]{Omer Bobrowski}
\author[2]{Gesine Reinert}
\author[1]{Primoz Skraba}

\affil[1]{School of Mathematical Sciences, Queen Mary University of London}
\affil[2]{Department of Statistics, Oxford University }

\theoremstyle{definition}
\newtheorem{theorem}{Theorem}

\newtheorem{remark}[theorem]{Remark}

\newtheoremstyle{case}{}{}{}{}{}{:}{ }{}
\theoremstyle{case}

\newcommand{\gr}[1]{{\color{magenta}#1}}

\def\name{L2N2}
\begin{document}

\maketitle

\begin{abstract}
Estimating the intrinsic dimensionality (ID) of data is a fundamental problem in machine learning and computer vision, providing insight into the true degrees of freedom underlying high-dimensional observations. Existing methods often rely on 
geometric or distributional assumptions and can significantly fail when these assumptions are violated. In this paper, we introduce a novel ID estimator based on nearest-neighbor distance ratios that involves simple calculations and achieves state-of-the-art results. Most importantly, we provide a theoretical analysis proving that our estimator is \emph{universal}, namely, it converges to the true ID independently of the distribution generating the data. We present experimental results on benchmark manifolds and real-world datasets to demonstrate the performance of our estimator.
\end{abstract}

\section{Introduction}

The \emph{manifold hypothesis} \cite{whiteley2025statistical} states that high-dimensional data is most often concentrated on or around a lower dimensional space.
The dimension of this lower dimensional structure is referred to as the \emph{intrinsic dimensionality} (ID) of the data,  measuring the degrees of freedom, or the number of latent variables capturing the  structure of the data.
Estimating the intrinsic dimensionality of data is a pivotal task in data analysis and machine learning \cite{magnitude,PH,Costa}, computer vision \cite{gong2019intrinsic, Pope2021TheID, gamper2021multiple, lorenz2023detecting}, materials science \cite{lung1988fractal,zhou2021local}, signal processing \cite{carter2009local,little2009estimation}, dynamical systems \cite{grassberger1983characterization,takens1981numerical}, and other areas. Existing methods face significant challenges such as the curse of dimensionality, scale dependence,  and sensitivity to the underlying distribution.

In this paper we present {\name} -- a new ID estimator based on the loglog (L2) values of the nearest-neighbor (N2) distance ratio. The key properties of {\name} are: (a) it is computationally efficient, (b) it is supported by statistical theory proving that it converges to the true ID, regardless of what the data distribution is, and (c) it achieves, and even surpasses, state-of-the-art results.

\subsection{Related Work}

Dimensionality estimation is an extensively studied problem; we refer the reader to the  comprehensive surveys~\cite{Campadelli, binnie2025surveydimensionestimationmethods} for more details. There are several approaches to ID estimation, each with their own strengths and limitations, and we will briefly mention a few here.

A classical approach is to estimate the asymptotic slope of a given statistic as a function of the sample size. Examples include the correlation dimension \cite{grassberger1983characterization},
the box-counting dimension \cite{bouligand1929notion},
and the minimum-spanning-tree dimension \cite{costa2004geodesic}.
A second approach is based on various projection methods, looking for the number of directions with largest variance.
This includes PCA and local PCA \cite{Fan2010IntrinsicDE}, MDS \cite{MDS}, Isomap \cite{ISOMAP}, diffusion maps \cite{coifman2006diffusion}, and more.

The approach most relevant to our work is based on the relationship between the nearest-neighbor distances. Levina and Bickel~\cite{levina} proposed a maximum likelihood ID estimator using the nearest-neighbor distances. This method assumes that locally the points follow the distribution of a homogeneous Poisson point process. 
The TwoNN (Two Nearest-Neighbors) method, by Facco et al.~\cite{Facco}, builds on similar principles, using the ratio of the first and second nearest-neighbor distances to obtain a computationally efficient estimator. 
The GriDE (Generalized Ratios ID Estimator) method,  
by Denti et al.~\cite{denti}, generalizes the TwoNN estimator using  multiple nearest-neighbor distances. This estimator demonstrates improved stability and accuracy, particularly in noisy and heterogeneous data distributions.
The DANCo (Dimensionality from Angle and Norm Concentration) method, by Ceruti et al.~\cite{Ceruti}, includes additional information from the angles between nearest neighbors. 

Recent work further refines nearest-neighbor and likelihood-based approaches. Qiu et al.~\cite{Qiu_under} study the finite-sample underestimation bias of nearest-neighbor estimators and propose an explicit correction that improves empirical accuracy. 
From a representation-learning perspective, Kärkkäinen~\cite{Karkkainen} introduces an additive autoencoder framework in which the intrinsic dimension is inferred from reconstruction behavior as the latent dimension varies. 
Likelihood-based methods have also been explored: Horvat et al.~\cite{Horvat} estimate intrinsic dimensionality using normalizing flows, leveraging invertible generative models to analyze the effective dimensional structure of the data distribution, while Tempczyk~\cite{Tempczyk} proposes LIDL, a local likelihood-based estimator that connects intrinsic dimension to small-scale density behavior.
\subsection{Context and Contributions}
 A key  property for any dimensionality estimator is \emph{scale invariance}, namely, rescaling the data should have no effect on the results. Aside from making it independent to the choice of measurement units, which is clearly desirable, we focus here on a much less obvious, yet very powerful, consequence. Under some additional  conditions, scale invariance may lead to the property of \emph{universality} \cite{bobrowski2023universal,bobrowski2024universalityrandompersistenthomology} -- the limiting distribution of the estimator is independent of the distribution of the data. Our {\name} estimator was designed to take advantage of this phenomenon, providing excellent performance under very minimal assumptions on the data.
 
 %
%
The contributions of the paper are:
\begin{itemize}
    \item We derive {\name} -- a new method for estimating  ID from ratios of nearest neighbor distances. 
    \item We provide a rigorous theoretical analysis
    of {\name}, and give the precise relationship between the ID and the computed statistic. 
    Most importantly, we show that the proven guarantees for {\name} do not require knowing the distribution of the
    data, 
    and that they apply to a vast class of input distributions with only mild assumptions.
    \item As the theoretical results are asymptotic, we provide a method to deal with finite sample effects.
     We show that {\name} achieves, and even exceeds, the state-of-the-art performance on benchmarking datasets. We also apply {\name} to real-world datasets, showing consistent results 
    with commonly accepted dimensionalities. 
\end{itemize}
Our proofs apply for data supported on $C^1$ manifolds with bounded densities, which already encompass a broad class of practical models. We believe the mechanism driving universality is substantially more general and should extend to settings such as fractal measures and stratified spaces. However, showing this rigorously remains a direction for future work.


\vspace{5pt}
The rest of the paper is organized as follows. Section \ref{sec:dim_est_method} introduces our new dimensionality-estimation method, {\name}. Section \ref{sec:theory} presents the theoretical analysis supporting {\name}. Section \ref{sec:exp} outlines the experimental setup, and Section \ref{sec:res} discusses the results. Section \ref{sec:exp_discussion} offers a concluding discussion.
Additional derivations and empirical results for the constants in our method are given in the Appendices.

\section{NN-Ratio Based Dimensionality Estimation}\label{sec:dim_est_method}

Our dimensionality estimator is based on the ratio of nearest-neighbor distances. Notably, {\name}   only requires mean-value estimates, avoiding the need for explicitly knowing the distribution of the nearest-neighbor ratio.

Suppose that a finite set $\cX\subset \R^D$ is sampled from a $d$-dimensional subspace, with $d<D$. Our goal is to estimate $d$ from $\cX$. 
For $x\in\R^D$ and $k\ge 1$, we define the $k^{th}$ nearest neighbor distance of $x$ with respect to $\cX$ as:
\[    
R_k(x,\cX) := \inf
\set{r > 0: {|B_r(x)\cap \cX \bs \set{x}| \ge k}},
\]  
where $B_r(x)$ is the $D$-dimensional Euclidean ball of radius $r$ around $x$.
For $k>j\ge1$ we define,
\begin{equation}\label{eqn:Lkj}
 L_{k,j}(x,\cX) := -\logg\param{\frac{R_k(x,\cX)}{R_j(x,\cX)}},
\end{equation}
and its average,
\begin{equation}\label{eqn:Lkj_avg}
\bar L_{k,j}(\cX) = \frac1{|\cX|}\sum_{x\in\cX} L_{k,j}(x,\cX).
\end{equation}
 Theorem \ref{thm:univ_limit} below shows that asymptotically, $\bar L_{k,j}$ linearly depends on $\log(d)$ (see Remark \ref{rem:coeffs} for more details). Consequently,  
our proposed {\name} estimator for $d$ based on the $(k,j)$--nearest neighbor distances is
\begin{equation}
\widehat{d}_{k,j}(\cX) = \exp\param{\alpha_{k,j}\bar{L}_{k,j}(\cX)+\beta_{k,j}},
\label{eqn:dhat}
\end{equation}
where $\alpha_{k,j}$ and $\beta_{k,j}$ are pre-determined constants. Importantly, while $\alpha_{k,j}$ and $\beta_{k,j}$ might depend on the size of $\cX$, they  are otherwise \emph{independent of any other property of the sample $\cX$} or its distribution.  
This fact allows us to perform a simple tuning stage for our estimator, to optimize the values of $\alpha_{k,j}$ and $\beta_{k,j}$. We use least-squares to fit the slope and intercept for $\bar L_{k,j}(\cX)$ as a function of $\log(d)$, using samples generated by the $d$-dimensional normal distribution; see Fig.\,\ref{fig:mapping_curves}. We emphasize that the tuning stage is only performed once (for a given sample size), and the values of $\alpha_{k,j}$ and $\beta_{k,j}$ can be stored and reused in future applications, regardless of the type of the data (see supplementary materials for a table of precomputed values).

\begin{remark}\label{rem:coeffs}
    Theorem \ref{thm:univ_limit} below states that asymptotically $$\bar L_{k,j}(\cX) \approx \log(d) + C_{k,j},$$ implying that we should take $\alpha_{k,j}\!=\!1, \beta_{k,j}\!=\!C_{k,j}$. In practice, however, we observed that 
   while this holds for a large sample size $n$ (as the limit implies), for small sample sizes  we have $$L_{k,j}(\cX) \approx \alpha_{k,j}^{(n)}\log(d) +\beta_{k,j}^{(n)}, $$ where $\alpha_{k,j}^{(n)} < 1$. Therefore, to get the best results, we first need to learn the values of $\alpha_{k,j}^{(n)}$ and $\beta_{k,j}^{(n)}$, suitable for the given sample size, as described above. The details on how we estimate these values are given in Section \ref{sec:dim_map_construction}.
\end{remark}

\begin{figure}
    \centering
    \begin{tabular}{c}
    \includegraphics[width=0.99\linewidth]{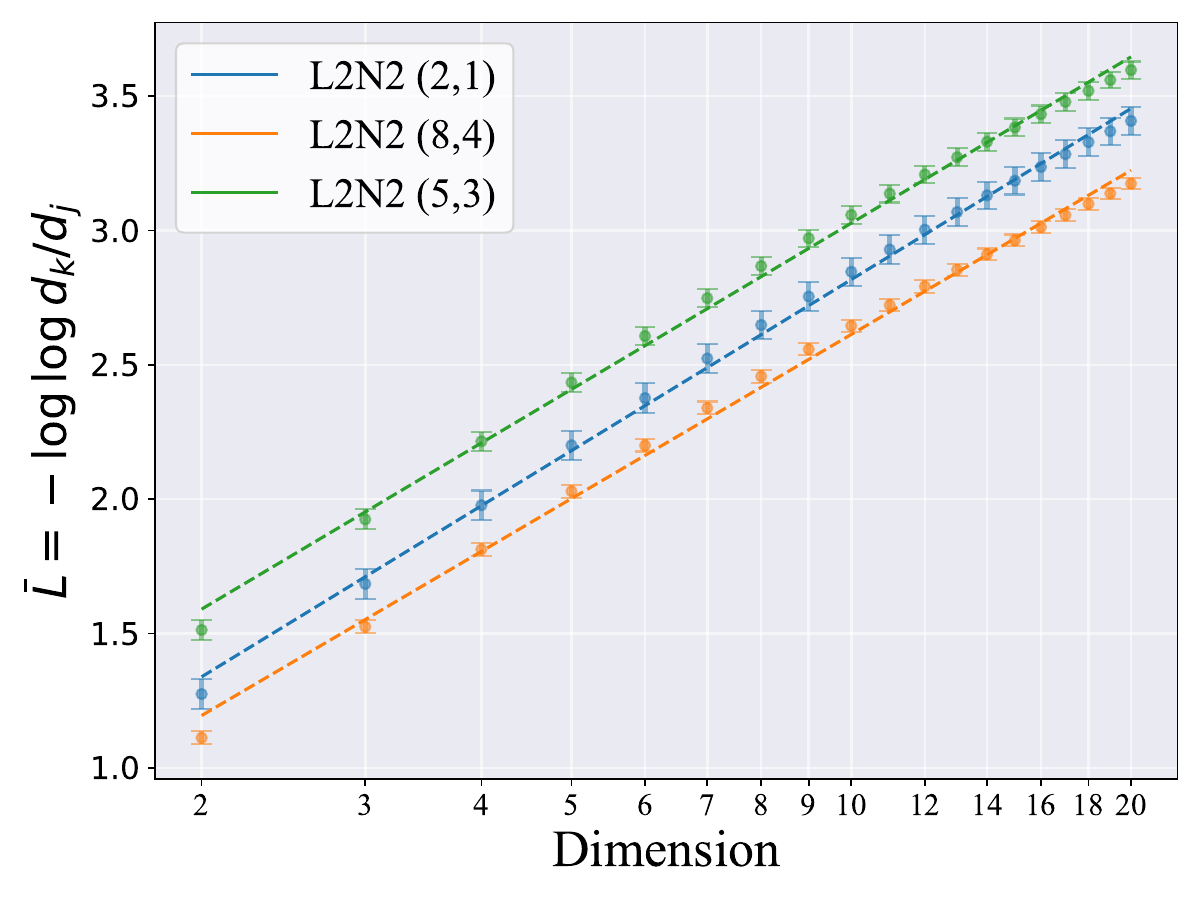} \\
    \end{tabular}
    \caption{%
      The (approximately) linear relationship between $\bar L_{k,j}$ and $\log(d)$ (for $(k,j) = (2,1)$, $(5,3)$, and $(8,4)$). Here we used  2,500 points, sampled from the $d$-dimensional Gaussian distribution. The error bars are computed from 1000 independent trials. 
    } 
    \label{fig:mapping_curves}
\end{figure}

\section{Theoretical Analysis}
\label{sec:theory}
In this section 
we provide the theoretical analysis to support our estimator $\widehat{d}_{k,j}(\cX)$ \eqref{eqn:dhat}, and to show that it is a consistent estimator for $d$, independently of the data distribution or its support.
Let $\cM\subset \R^D$ be a $d$-dimensional $C^1$ manifold embedded in $\R^D$. We assume that $f:\cM\to\R$ is a probability density function on $\cM$, such that
\[
    0 < \fmin = \inf_{\cM} f \le \sup_{\cM} f = \fmax < \infty.
\]

Let $\cX_n = \set{X_1,\ldots, X_n}$ be a set of independent random variables with density $f$. We are interested in the limit of $\bar L_{k,j}(\cX_n)$ \eqref{eqn:Lkj_avg} as $n\to\infty$. We use  $Y_n \xrightarrow{P} Y$  to denote convergence in probability, i.e. $\mathbb{P}(|Y_n-Y| \ge \eps)\to 0$ for any $\eps>0$ as $n \rightarrow \infty$.

\begin{thm}\label{thm:univ_limit}
Under the assumptions above, as $n \rightarrow \infty$,
\[
\bar L_{k,j}(\cX_n) \xrightarrow{P} \log(d) + C_{k,j},
\]
where $C_{k,j}$ is independent of $d$ and $f$, and is given in \eqref{eqn:Ckj}.
\end{thm}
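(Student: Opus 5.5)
The plan is to first identify the limit pointwise, using the local homogeneous Poisson approximation, and then pass from individual points to the average $\bar L_{k,j}$ by a first- and second-moment argument.

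\textbf{Step 1: the limiting ratio law.} Fix a point $X_1=x\in\cM$. As $n\to\infty$, rescale the sample around $x$ by $(n f(x))^{1/d}$ and project to the tangent space $T_x\cM$. Because $\cM$ is $C^1$ and $f$ is bounded and positive, with $f$ approximately constant near $x$, the rescaled process converges to a homogeneous unit-rate Poisson process on $\R^d$. The nearest-neighbour distances $R_1,\dots,R_k$ from $x$ then satisfy
\[
\omega_d\, n f(x) R_i^d \to \Gamma_i := E_1+\cdots+E_i,
\]
where the $E_i$ are i.i.d.\ $\mathrm{Exp}(1)$. The ratio $R_k/R_j$ is scale invariant, so the factor $n f(x)$ cancels and
\[
\frac{R_k}{R_j}\Rightarrow \left(\frac{\Gamma_k}{\Gamma_j}\right)^{1/d}.
\]
This is the universality mechanism: the limit depends on neither $f$ nor $x$. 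Taking logs gives
\[
L_{k,j} = -\log\log\frac{R_k}{R_j} \Rightarrow \log d - \log\log\frac{\Gamma_k}{\Gamma_j}.
\]
Here $\Gamma_j/\Gamma_k\sim \mathrm{Beta}(j,k-j)$, so we may set
\[
C_{k,j} := -\E\left\{\log\left(-\log B\right)\right\}, \qquad B\sim\mathrm{Beta}(j,k-j).
\]
This quantity is finite and independent of $d$ and $f$.

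\textbf{Step 2: convergence of the mean.} We need $\E L_{k,j}(X_1,\cX_n)\to \log d + C_{k,j}$, which requires uniform integrability of $L_{k,j}$. There are two tails to control.

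\emph{Ratio near 1.} When $R_k/R_j\to 1$, the quantity $-\log\log$ blows up to $+\infty$. We would bound $\prob{R_k/R_j\le 1+\delta}\le C\delta$ uniformly in $n$. Conditional on $R_j=r$, this event requires another point in the thin annulus between $r$ and $r(1+\delta)$. Its $f$-mass is $O(\delta)$ times the ball mass, using $\fmax/\fmin$ and the volume comparison on $C^1$ manifolds at small scales. Since $\log(-\log)$ of a quantity with at most linear density near $1$ is integrable, this gives domination.

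\emph{Large ratio.} This only gives $-\log\log$ growth, which is mild and needs nothing beyond a crude tail bound.

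Large distances, where the local approximation fails, occur with probability $e^{-cn}$ for $R_j$ beyond a fixed $r_0$; there we bound $L$ using the near-1 estimate again. Combining these gives convergence of the mean.

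\textbf{Step 3: concentration.} To show $\mathrm{Var}(\bar L_{k,j})\to 0$, it suffices that $\mathrm{Cov}(L(X_1),L(X_2))\to 0$. Points $X_1,X_2$ at distance greater than $2\epsilon_n$, with $\epsilon_n\to 0$ but $n\epsilon_n^d\to\infty$, have asymptotically independent $k$-NN neighbourhoods. We would make this precise by Poissonization, or by a coupling in which the neighbourhoods depend on disjoint sample subsets with high probability. The probability that $X_1,X_2$ are close is $O(\epsilon_n^d)\to0$, and second moments are uniformly bounded by the argument of Step 2. Chebyshev's inequality then gives the convergence in probability.

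\textbf{Main obstacle.} I expect the hardest part to be the uniform integrability in Step 2, especially near ratio $1$, together with the uniform-in-$x$ local Poisson approximation on a general $C^1$ manifold, which may be noncompact. For the latter I would first try restricting to a compact set of probability $1-\eta$ and using uniform $C^1$ chart control there, and then letting $\eta\to0$.
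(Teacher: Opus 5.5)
Your Step~1 and your identification $C_{k,j}=-\E\{\log(-\log B)\}$ with $B\sim\mathrm{Beta}(j,k-j)$ match the paper (Lemma~\ref{lem:homogeneous}). Steps~1--3 together amount to proving by hand the weak law of \cite{Penrose2011LimitTF} that the paper invokes. The genuine gap is in Step~2, and Step~3 inherits it.

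You run first- and second-moment arguments on the \emph{untruncated} $L_{k,j}(X_1,\cX_n)$. On the event that the neighbours are far away, you propose to ``bound $L$ using the near-1 estimate again''. That estimate needs the annulus $B_{(1+\delta)\rho}(x)\setminus B_\rho(x)$ to have $F$-mass $O(\delta)\,F(B_\rho(x))$. The $C^1$ chart gives this only for $\rho$ below some fixed $r_0$. At macroscopic radii the sphere $\partial B_\rho(x)$ can carry positive mass, and then the moments you need do not exist.

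For example, work in $\R^4$ with coordinates $(a_1,\ldots,a_4)$. Let $P$ be the unit circle in the $(a_1,a_2)$-plane and $Q$ the circle of radius $2$ in the $(a_3,a_4)$-plane, both centred at the origin. Let $f$ be uniform on the compact smooth curve $\cM=P\cup Q$, so $d=1$. Every point of $Q$ is at distance exactly $\sqrt5$ from every point of $P$. Consider the event $\set{X_1\in Q,\ X_2,\ldots,X_n\in P}$, which has probability $\tfrac{2}{3}(\tfrac{1}{3})^{n-1}>0$. On it, for $n>k$, we get $R_j(X_1,\cX_n)=R_k(X_1,\cX_n)=\sqrt5$, hence $L_{k,j}(X_1,\cX_n)=+\infty$. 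So the mean of $L_{k,j}(X_1,\cX_n)$ is $+\infty$ for every $n>k$, and $\bar L_{k,j}(\cX_n)=+\infty$ with positive probability. Neither ``convergence of the mean'' nor $\mathrm{Var}(\bar L_{k,j}(\cX_n))\to0$ can hold. The theorem itself is still true here, because such events are exponentially rare. A small probability of the far event does not, however, control its contribution to moments.

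The missing idea is the paper's truncation. Put $L_{k,j,r}=L_{k,j}\,\indf{R_k\le r}$ with $r$ small enough that the uniform small-scale ball and annulus estimates hold. Let $N_x\sim\mathrm{Bin}(n-1,F(B_r(x)))$. A union bound and a binomial Chernoff bound (Lemma~\ref{lem:bounded_limit}) give
\[
\prob{\bar L_{k,j,r}(\cX_n)\neq\bar L_{k,j}(\cX_n)}\le n\sup_{x\in\cM}\prob{N_x\le k-1}\le n e^{-cn}\to0 .
\]
So convergence in probability transfers from $\bar L_{k,j,r}$ to $\bar L_{k,j}$ without any moment information about the far event.

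You should therefore run your Steps~1--3 on $L_{k,j,r}$. There your near-1 and large-ratio estimates are legitimate and give moments of all orders, uniformly in $n$. This is exactly the moment condition the paper verifies before applying Theorem~3.1 of \cite{Penrose2011LimitTF}. That theorem also supplies, rigorously, the local Poisson limit and the covariance decay that you only sketch.

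A smaller point: $f$ is not assumed continuous. The local Poisson limit in Step~1 should therefore be stated at Lebesgue points of $f$, i.e.\ for a.e.\ $x$, which is enough after integrating against $f$.
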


\begin{remark} 
We make a few comments about 
Theorem \ref{thm:univ_limit}:

\noindent (1) The theorem shows a universal limit for the average $L_{k,j}$ value, which is sufficient for justifying the use $\widehat d_{k,j}(\cX)$. However, using the framework  developed in \cite{bobrowski2024universalityrandompersistenthomology}, one can prove that, in fact, the \emph{empirical distribution} of the collection of values 
$\{L_{k,j}(X_i,\cX_n)\}_{i=1,\ldots, n}$ 
also admits a \emph{universal limit}. 
Applications and a proof of this statement are left for future work.
 
 \noindent 
 (2) Theorem \ref{thm:univ_limit} is limited to $C^1$ manifolds. However,  the space of $C^1$ manifolds is quite general and covers many useful cases. Additionally, the good  performance of the {\name} estimator in practice suggests that the theorem 
 holds in much greater generality. There are several significant challenges in generalizing the proofs to more general spaces, and so we will address this in future work.

 \noindent (3) 
To prove Theorem \ref{thm:univ_limit},
we use the framework introduced in \cite{Penrose2011LimitTF},  for limit theorems of point processes on manifolds. It
was  originally used to show that the Levina-Bickel MLE estimator \cite{levina} is universal in the same sense as ours. As we show in Section~\ref{sec:res}, while the MLE estimator also asymptotically converges to $d$,  {\name} performs significantly better in practice. We discuss this further in Section \ref{sec:exp_discussion}.
\end{remark}

In the proof of Theorem \ref{thm:univ_limit}, we will make use of a truncated version of $\bar L_{k,j}$,  ignoring points with a far-away $k$-nearest neighbor, making the estimator easier to control. For $\cX_n = \set{X_1,\ldots, X_n}$,  we set 
   \[\bar L_{k,j,r}(\cX_n) := \frac1n\sum_{i=1}^n L_{k,j,r}(X_i,\cX_n), 
\]
where
\begin{equation}\label{eqn:Lkjr}
    L_{k,j,r}(x,\cX_n) := L_{k,j}(x,\cX_n) \cdot \indf{R_k(x,\cX_n) \le r},
\end{equation}
   $\indf{\cdot}$ is the indicator function, and $r>0$ is a constant. 
The proof of Theorem \ref{thm:univ_limit} splits into three main parts:

\vspace{5pt}

\noindent{\bf Step 1:} Show  that the limit for the truncated average $\bar L_{k,j,r}$ can be expressed by means of a  mixture of homogeneous Poisson processes:
\begin{lem}\label{lem:L2_limit}
    For a fixed $r>0$, as $n \rightarrow \infty,$
    \[
    \bar L_{k,j,r}(\cX_n) \xrightarrow{P} \int_{\cM}\mean{L_{k,j}(\mathbf{0}, \cH_{f(x)})}f(x)dx,
    \]
    where $\cH_{\lambda}$ is a homogeneous Poisson process  
    in $\R^d$ with rate $\lambda$, and $\mathbf{0}$ is the origin in $\R^d$.
\end{lem}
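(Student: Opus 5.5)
We prove Lemma \ref{lem:L2_limit} with the standard stabilization / weak law of large numbers strategy for geometric functionals of point processes on manifolds, following \cite{Penrose2011LimitTF}. The key observation is that $L_{k,j}$ is scale invariant: for any $c>0$,
\[
L_{k,j}(cx, c\cX) = L_{k,j}(x,\cX).
\]
Hence we may blow up the point cloud around each sample point by the factor $n^{1/d}$ without changing the summand. The plan is to prove convergence of the mean, $\mean{\bar L_{k,j,r}(\cX_n)}$, to the stated integral, and then convergence of the variance to zero. Convergence in probability then follows from Chebyshev's inequality.

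\textbf{Step (a): local limit of a single summand.} By exchangeability,
\[
\mean{\bar L_{k,j,r}(\cX_n)} = \int_\cM \mean{L_{k,j,r}(x, \cX_{n-1}\cup\set{x})}\,f(x)\,dx .
\]
Fix $x\in\cM$ and use a $C^1$ chart $\phi_x$ near $x$ whose differential at $x$ is an isometry onto $T_x\cM\cong\R^d$. Consider the rescaled process
\[
n^{1/d}\bigl(\phi_x^{-1}(\cX_{n-1}\cap U)-\phi_x^{-1}(x)\bigr)\quad\text{restricted to a ball } B_K(\mathbf{0}).
\]
This converges in total variation to $\cH_{f(x)}\cap B_K(\mathbf{0})$, since the density is continuous at $x$ (or a Lebesgue point argument suffices) and the binomial process converges to a Poisson process. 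Because the chart is $C^1$, Euclidean distances in $\R^D$ equal the chart distances up to a factor $1+o(1)$ on shrinking neighborhoods. The $k$ nearest neighbors lie, with high probability, inside $B_{Kn^{-1/d}}(x)$ once $K$ is large, which means $L_{k,j}$ stabilizes. Together these give
\[
n^{1/d}R_i(x,\cX_{n-1}\cup\set{x}) \to R_i(\mathbf{0},\cH_{f(x)}) \quad\text{in distribution, jointly for } i\in\set{j,k}.
\]
The map $(a,b)\mapsto-\logg(b/a)$ is continuous on $\set{b>a>0}$, and the Poisson limit has $R_k>R_j$ almost surely. Since $R_k\le r$ holds with probability tending to one, the continuous mapping theorem gives $L_{k,j,r}(x,\cdot)\Rightarrow L_{k,j}(\mathbf{0},\cH_{f(x)})$.

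\textbf{Step (b): uniform integrability, the main obstacle.} The function $-\logg(t)$ blows up both as $t\downarrow1$ (ties between $R_k$ and $R_j$) and as $t\to\infty$. The truncation $R_k\le r$, together with $\fmin>0$, handles the large-ratio side: it keeps $R_k$ bounded, and $R_j$ is bounded below only in a probabilistic sense. The danger is therefore $R_j$ very small or $R_k/R_j$ close to $1$. To control this I plan to bound a moment $\mean{|L_{k,j,r}|^{2}}$ uniformly in $n$ and $x$, using
\[
|\logg t|\le |\log(t-1)|+C \quad\text{near } t=1,\qquad |\logg t|\le \log\log t + C \quad\text{for large } t .
\]
The required tail estimates come from $\fmax<\infty$ and the fact that volumes of small balls on $\cM$ are comparable to $r^d$ uniformly. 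These give
\[
\prob{R_k/R_j\le 1+\delta}\le C\delta \qquad\text{and}\qquad \prob{R_j\le \eps R_k}\le C\eps^{d},
\]
and both yield finite logarithmic moments. This uniform comparability of ball volumes is where compactness-type regularity of $\cM$ is implicitly used, and I would state it as needed. Given the uniform moment bound, dominated convergence in $x$ yields convergence of the mean to $\int_\cM \mean{L_{k,j}(\mathbf{0},\cH_{f(x)})}f(x)\,dx$.

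\textbf{Step (c): variance.} Write $\mathrm{Var}(\bar L_{k,j,r})$ as $n^{-1}$ times a diagonal term, plus an off-diagonal term for a pair $X_1,X_2$. The diagonal term vanishes by the moment bound. For the off-diagonal term, the pair is with high probability at distance much larger than $n^{-1/d}$, and then the two summands depend on essentially disjoint regions. Repeating the Step (a) argument for two distinct points shows the joint law converges to that of independent copies. Hence the covariance tends to zero, again using uniform integrability from Step (b).
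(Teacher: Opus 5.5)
Your proposal is correct in outline, and its analytic core is the same as the paper's. The paper's proof of this lemma consists essentially of your Step (b). It notes that $L_{k,j}$ is translation and rotation invariant, invokes Theorem 3.1 of Penrose--Yukich, and then verifies $\sup_n \mean{|L_{k,j,r}(X_1,\cX_n)|^p}<\infty$ for $p=3$. It does this by the mechanism you describe: condition on $X_1=x$ and $R_k=u$, use that the $k-1$ nearer points are then i.i.d.\ in $B_u(x)$, and bound the binomial probabilities that $j$ of them land in $B_{vu}(x)$, or that $k-j$ of them land in the annulus $B_u(x)\setminus B_{vu}(x)$. Your Steps (a) and (c) are what the paper imports from Penrose--Yukich rather than proving: chart localization with $1+o(1)$ distortion, Poisson approximation at Lebesgue points of $f$, stabilization, and asymptotic independence of two summands. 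Your route is therefore a self-contained re-derivation of that weak law for this particular functional. It is longer, but it shows exactly where the $C^1$ structure, the compactness-type regularity and $\fmin>0$ enter. The paper's route is short but relies on the cited theorem's hypotheses being met.

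There are two things to tighten.

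\textbf{Moment order in Step (c).} A uniform second-moment bound does not close Step (c) as written. Passing to the limit in $\mean{L_{k,j,r}(X_1,\cX_n)\,L_{k,j,r}(X_2,\cX_n)}$ requires uniform integrability of the product, hence a uniform bound on a moment of order $p>2$; this is why the paper checks $p=3$. Your own conditional estimates already give $\mathbb{P}(L_{k,j,r}(X_1,\cX_n)>s)\le Ce^{-s}$ and $\mathbb{P}(L_{k,j,r}(X_1,\cX_n)<-s)\le C\exp(-jd\,e^{s})$ uniformly in $n$. So every moment is uniformly bounded and the fix is immediate. Alternatively, truncate $L_{k,j,r}$ at a level $M$ before the variance computation and control the remainder in $L^1$.

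\textbf{Annulus estimate.} The bound $\mathbb{P}(R_k/R_j\le 1+\delta)\le C\delta$ needs the $F$-mass of the thin annulus $B_u(x)\setminus B_{u/(1+\delta)}(x)$ to be $O(u^d\delta)$. This does not follow from two-sided comparability of ball volumes alone. It does hold uniformly for $u$ below some scale $r_0$ depending on $\cM$: in a $C^1$ chart at $x$, the distance to $x$ is strictly increasing along rays, with radial derivative bounded below. Taking $r\le r_0$ is harmless for the main theorem. The paper's bound on $I_2$ uses this same fact implicitly.
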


Intuitively, the integral in Lemma \ref{lem:L2_limit}
says that, in the limit, we have points covering the manifold $\cM$, and the neighborhood of each point looks like a homogeneous Poisson process with rate $f(x)$.

\vspace{5pt}

\noindent{\bf Step 2:} Show that the loss of the truncated version is negligible, so that $\bar L_{k,j,r} \approx \bar L_{k,j}$:
\begin{lem}\label{lem:bounded_limit}
$\limninf \prob{\bar L_{k,j,r}(\cX_n) \ne \bar L_{k,j}(\cX_n)} = 0$.
\end{lem}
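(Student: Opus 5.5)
The two averages can differ only if some sample point has a far-away $k$-th nearest neighbor. So it suffices to show
\[
\prob{\max_{1\le i\le n} R_k(X_i,\cX_n) > r} \to 0 .
\]
The event $\{\bar L_{k,j,r}(\cX_n) \ne \bar L_{k,j}(\cX_n)\}$ is contained in the event that $R_k(X_i,\cX_n)>r$ for some $i$. When every $R_k(X_i,\cX_n)\le r$, all indicators in \eqref{eqn:Lkjr} equal one and the two averages coincide term by term. We may also ignore degenerate configurations where $R_k = R_j$ or some $R_j=0$. These have probability zero, because $f$ is a density on a $d$-manifold and $d\ge1$.

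The main step is a covering argument on $\cM$. I would assume, as is implicit in the setting since $f$ is a probability density with $\fmin>0$, that $\cM$ is compact; its volume is then at most $1/\fmin<\infty$. Cover $\cM$ by finitely many Euclidean balls $B_{r/4}(y_1),\dots,B_{r/4}(y_m)$ with centers $y_l\in\cM$; the number $m$ depends only on $r$ and $\cM$, not on $n$. Now fix any $X_i$. It lies in some $B_{r/4}(y_l)$, and every point of $B_{r/4}(y_l)$ is within distance $r/2<r$ of $X_i$. Hence if every ball $B_{r/4}(y_l)$ contains at least $k+1$ sample points, then $R_k(X_i,\cX_n)\le r$ for all $i$. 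The $k+1$ is needed because $X_i$ itself is excluded from the count.

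Next I bound the probability that some ball fails. Set $p_l := \int_{B_{r/4}(y_l)\cap\cM} f \ge \fmin \,\mathrm{vol}(B_{r/4}(y_l)\cap \cM) =: p_l'$. Since $y_l\in\cM$ and $\cM$ is a $C^1$ $d$-manifold, the set $B_{r/4}(y_l)\cap\cM$ is relatively open, nonempty, and contains a neighborhood of $y_l$ in $\cM$. Therefore $p_l'>0$, and $p_* := \min_l p_l>0$ is independent of $n$. The number of sample points in $B_{r/4}(y_l)$ is $\mathrm{Bin}(n,p_l)$, so
\[
\prob{\text{some ball has} \le k \text{ points}} \le m\sum_{t=0}^{k}\binom{n}{t}(1-p_*)^{n-t} \le m (k+1) n^k (1-p_*)^{n-k} \to 0 .
\]
This proves the lemma.

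The main obstacle is the positivity and uniformity of $p_*$, that is, ensuring that the relevant balls carry mass bounded below independently of $n$. This is exactly where compactness (finite cover) and $\fmin>0$ enter. If $\cM$ were not assumed compact, I would first try to deduce from $\mathrm{vol}(\cM)\le 1/\fmin$ and the $C^1$ structure that $\cM$ can be covered by finitely many balls of radius $r/4$ with centers on $\cM$, each carrying positive volume. Failing that, I would add compactness to the standing assumptions, which is standard in \cite{Penrose2011LimitTF}.
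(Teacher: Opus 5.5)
Your argument is correct, provided $\cM$ is compact (the paper assumes this too, tacitly), and it takes a genuinely different route from the paper's.

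\textbf{The paper's route.} It takes a union bound over the $n$ sample points, $\prob{\bar L_{k,j,r}(\cX_n)\ne\bar L_{k,j}(\cX_n)}\le n\,\prob{R_k(X_1,\cX_n)>r}$. It then conditions on $X_1=x$, so that the number of other points in $B_r(x)$ is $\mathrm{Binomial}(n-1,F(B_r(x)))$. Next it uses the uniform estimate $F(B_r(x))\ge C_\mu r^d$ over $x\in\cM$ (Lemma 4.3 of \cite{Penrose2011LimitTF}). Finally a Chernoff bound gives decay $e^{-cnr^d}$, which absorbs the factor $n$.

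\textbf{Your route.} You make a deterministic reduction to a fixed finite family of sets. With $m$ balls of radius $r/4$ covering $\cM$, if every ball holds at least $k+1$ sample points then $\max_i R_k(X_i,\cX_n)\le r/2$. So your union bound runs over $m$ sets, independent of $n$, and a crude binomial tail is enough.

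\textbf{What each buys.} Your proof is self-contained: it only needs finitely many relatively open subsets of $\cM$ to carry positive mass. It needs neither the uniform $r^d$ volume bound nor a Chernoff inequality. The paper's route gives explicit dependence on $r$. Its bound $n e^{-cnr^d}$ still vanishes for shrinking radii $r_n$ with $nr_n^d/\log n\to\infty$. Your $m$ and $p_*$ are not quantified in $r$, and quantifying them would bring back essentially the same uniform volume estimate.

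\textbf{Correction on compactness.} $\fmin>0$ only gives $\mathrm{vol}(\cM)\le 1/\fmin$, which implies neither compactness nor boundedness. For example, the surface $\{(x,y,z)\in\R^3: z>0,\ x^2+y^2=e^{-2z}\}$ has finite area but is unbounded. So your first fallback, extracting a finite cover from finite volume alone, would fail. Worse, under the uniform density on this surface, the topmost sample point has fewer than $k$ neighbours within distance $r$ with probability bounded away from zero. Hence $\prob{\max_i R_k(X_i,\cX_n)>r}$ does not tend to $0$. Compactness is therefore a genuinely needed hypothesis, so your second fallback of adding it is the right move. The paper relies on it implicitly: the bound $F(B_r(x))\ge C_\mu r^d$ it borrows fails on this example.
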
 

\noindent{\bf Step 3:} Compute the expected value for a homogenous Poisson process around the origin:
\begin{lem}\label{lem:homogeneous}
Let $\cH_\lambda$ be a homogeneous Poisson process in $\R^d$ with rate $\lambda$, then
\[
\mean{L_{k,j}(\mathbf{0},\cH_\lambda)} = \log(d) + C_{k,j},
\]
where $C_{k,j}$ is given in \eqref{eqn:Ckj}, and is independent of $d,\lambda$.
\end{lem}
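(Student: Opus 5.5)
We compute the law of $R_k(\mathbf{0},\cH_\lambda)/R_j(\mathbf{0},\cH_\lambda)$ explicitly, using the standard representation of a homogeneous Poisson process through its ordered distances from the origin. Let $\omega_d$ denote the volume of the unit ball in $\R^d$. The points $V_i := \lambda\omega_d R_i(\mathbf{0},\cH_\lambda)^d$, $i\ge1$, form a homogeneous Poisson process of rate $1$ on $(0,\infty)$; this follows from the mapping theorem, since the expected number of points of $\cH_\lambda$ in $B_r(\mathbf{0})$ is $\lambda\omega_d r^d$. Hence $V_i=E_1+\cdots+E_i$ with $E_1,E_2,\ldots$ i.i.d.\ standard exponentials. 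Consequently
\[
\frac{R_k(\mathbf{0},\cH_\lambda)}{R_j(\mathbf{0},\cH_\lambda)} = \param{\frac{V_k}{V_j}}^{1/d},
\]
and the rate $\lambda$ and the constant $\omega_d$ cancel. This is exactly where independence from $\lambda$ comes from.

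Next we take the double logarithm. Since $V_k>V_j$ almost surely, $\log(R_k/R_j)=\frac1d\log(V_k/V_j)>0$. Therefore
\[
L_{k,j}(\mathbf{0},\cH_\lambda) = -\log\param{\tfrac1d\log(V_k/V_j)} = \log(d) - \log\log(V_k/V_j).
\]
All the dependence on $d$ now sits in the additive $\log(d)$. Taking expectations gives the claim with
\[
C_{k,j} := -\mean{\logg(V_k/V_j)}.
\]
This constant depends only on $(k,j)$ through the Gamma sums.

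To make $C_{k,j}$ explicit and to confirm it is finite, we use the beta--gamma algebra. The ratio $B:=V_j/V_k$ has the $\mathrm{Beta}(j,k-j)$ distribution. Moreover, $\log(V_k/V_j)=-\log B$, and this variable has an explicit density on $(0,\infty)$:
\[
g(t)=\frac{1}{\mathrm{B}(j,k-j)}e^{-jt}(1-e^{-t})^{k-j-1}.
\]
So
\[
C_{k,j}=-\int_0^\infty \log(t)\, g(t)\,dt,
\]
which is the candidate formula for \eqref{eqn:Ckj}. Alternatively, we can expand $(1-e^{-t})^{k-j-1}$ binomially and use $\int_0^\infty \log(t)e^{-mt}dt=-(\gamma+\log m)/m$. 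This gives a finite sum of terms of the form $\gamma+\log(j+i)$, where $\gamma$ is the Euler--Mascheroni constant.

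The main point to check is integrability, so that the expectation is well defined. Near $t=\infty$, the factor $e^{-jt}$ dominates $\log t$. Near $t=0$, we have $g(t)\sim c\,t^{k-j-1}$, so $|\log t|\,g(t)$ is integrable for every $k-j\ge1$. In the case $k=j+1$ the density tends to a positive constant at $0$, and integrability of $|\log t|$ at $0$ still holds. With these checks the lemma follows.
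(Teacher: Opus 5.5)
Your proof is correct and follows the paper's route: the unit-rate arrival times $\lambda\omega_d R_i^d$, the $\mathrm{Beta}(j,k-j)$ law of $V_j/V_k$, the identity $L_{k,j}(\mathbf{0},\cH_\lambda)=\log(d)-\logg(V_k/V_j)$, then the substitution $t=-\log u$, binomial expansion and $\int_0^\infty \log(t)e^{-mt}\,dt=-(\gamma+\log m)/m$; your integrability check is a useful addition that the paper omits. Your normalizing constant $1/\mathrm{B}(j,k-j)=(k-j)\binom{k-1}{j-1}$ is the correct one, whereas the paper's display \eqref{eqn:Ckj} uses $\binom{k-1}{j-1}$, which agrees only when $k=j+1$, so for $k-j\ge 2$ the paper's explicit formula and table are too small by the factor $k-j$ (e.g.\ $C_{3,1}=\gamma-\log 2\approx-0.116$, not the tabulated $-0.058$).
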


The proof for Lemma \ref{lem:homogeneous} reveals the connection between $\bar L_{k,j}$ and the intrinsic dimension $d$ and so is given below.
The proofs for Lemmas \ref{lem:L2_limit} and \ref{lem:bounded_limit} are more technical, mainly consisting of bounding moments.
Using  Lemmas \ref{lem:L2_limit}-\ref{lem:homogeneous}, we can now prove our main theorem.

\subsection{Proofs}
\begin{proof}[Proof of 
Theorem \ref{thm:univ_limit}]
Set,
\[
L := \int_{\cM}\mean{L_{k,j}(\mathbf{0}, \cH_{f(x)})}f(x)dx.
\]
From Lemmas  \ref{lem:L2_limit} and \ref{lem:bounded_limit}  we have that for any $\epsilon >0$,
\begin{align*}
\mathbb{P}(|\bar L_{k,j}(\cX_n) &- L| > \epsilon )\le\mathbb{P}(| \bar L_{k,j, r}(\cX_n) - L| > \eps/2 )  \\
& + \mathbb{P}(|\bar L_{k,j}(\cX_n) - \bar L_{k,j, r}(\cX_n)| > \eps/2 )
\rightarrow 0,
\end{align*}
hence $
    \bar L_{k,j}(\cX_n)\xrightarrow{P} L$, as $n \rightarrow \infty$.
From Lemma \ref{lem:homogeneous} it follows that $L = \log(d)+C_{k,j}$.
\end{proof}
\begin{proof}[Proof of Lemma \ref{lem:L2_limit}]
   We will use Theorem 3.1 from \cite{Penrose2011LimitTF}, which proves the result for translation and rotation invariant functionals which satisfy a moment condition.
 From the definitions of $L_{k,j}$ \eqref{eqn:Lkj} and $L_{k,j,r}$ \eqref{eqn:Lkjr}, it is clear
    the $L_{k,j}$ functional is both translation and rotation invariant, as required in \cite{Penrose2011LimitTF}. 
   So for Theorem 3.1 in \cite{Penrose2011LimitTF} to hold we need to show that
    \[
\sup_{n}\mean{\abs{L_{k,j,r}(X_1,\cX_n)}^p} < \infty,
\]
for some $p>2$. 
Conditioning on both $X_1$ and $R_k(X_1,\cX_n)$,
\[
\begin{split}
&\mathbb{E}\{|L_{k,j,r}(X_1,\cX_n)|^p\}  =\!  \int_0^\infty \prob{\abs{L_{k,j,r}(X_1,\cX_n)}^p >t}dt\\
&\quad=\int_0^\infty\int_{\cM}\int_0^r\cprob{|L_{k,j}| >t^{1/p}}{X_1 = x, R_k=u} \\
&\qquad \qquad\qquad\quad\times f_{R_k}(u | x) f(x)du\ dx\ dt,
\end{split}
\]
where $L_{k,j} = L_{k,j}(X_1,\cX_n)$, $R_k = R_k(X_1,\cX_n)$, and
 $f_{R_k}(u | x)$ is the conditional density of  
 $R_k$ given $X_1=x$.
 Next, define
\begin{align*}
    I_1(t,u) &:= \cprob{L_{k,j} <-t^{1/p}}{X_1=x, R_k=u}\\ 
    I_2(t,u) &:= \cprob{L_{k,j} >t^{1/p}}{X_1=x, R_k=u}.
\end{align*}
We will bound each term separately.
For $I_1$, note that $L_{k,j}<-t^{1/p}$ is equivalent to $R_k/R_j > v_1^{-1}$, where $v_1 = v_1(t) := e^{-e^{t^{1/p}}}$.
Therefore, 
\[
I_1(t,u) = \cprob{R_j < v_1 R_k}{X_1=x, R_k =u}.
\]
When $R_k=u$, we know that $B_u(x)$ contains exactly $(k-1)$ points from $\cX_{n-1}$. In addition, if $R_j < v_1 R_k$ then the ball $B_{v_1\cdot u}(x)$ contains at least $j$ points out of these $(k-1)$ points. Therefore, denoting $F(A) = \int_A f(z)dz$,
\[
I_1(t,u) \le \binom{k-1}{j}\param{\frac{F(B_{v_1u}(x))}{F(B_u(x))}}^j \le C_1 v_1^{jd} < C_1 v_1, 
\]
where $C_1>0$ is a constant that does not depend on $x,t,u$, and we used Lemma 4.3 in \cite{Penrose2011LimitTF}. Since our bound is independent of $u$ and $x$, we have
\begin{align}\label{eqn:I_1}
\int_0^\infty \int_{\cM}\int_0^r &I_1(t,u)f_{R_k(u)}f(x)du dx dt \nonumber\\ &\le C_1r\int_0^\infty e^{-e^{t^{1/p}}}dt.
\end{align}

For $I_2$, note that $L_{k,j} >t^{1/p}$ is equivalent to $R_k/R_j < v_2^{-1}$ where $v_2 = v_2(t):= e^{-e^{-t^{1/p}}}$. Similarly to the above argument, we have that if $R_j > v_2R_k$ then at least $(k-j)$ points in $B_u(x) \bs B_{v_2u}(x)$, and therefore,
\begin{align*}
I_2(t,u) & \le \binom{k-1}{k-j}\param{\frac{F(B_u(x))-F(B_{v_1u}(x))}{F(B_u(x))}}^{k-j} \\
& \le  C_2 (1-v_2^{d})^{k-j} < C_2 (1-v_2^d), 
\end{align*}
implying that

\begin{align}
\int_0^\infty \int_{\cM}
\int_0^r &I_2(t,u) f_{R_k}(u)f(x)du  dxdt \nonumber\\
&\le C_2r\int_0^\infty (1-e^{-e^{-t^{1/p}}})dt.
\label{eqn:I_2}
\end{align}

Since the right-hand-side on both \eqref{eqn:I_1} and \eqref{eqn:I_2} is finite for $p=3$, the proof is complete.

\end{proof}
\begin{proof}[Proof of Lemma \ref{lem:bounded_limit}]
    Note that if $\bar L_{k,j,r} \ne \bar L_{k,j}$ then there exists $i$ for which $R_k(X_i,\cX_n) >r$. Therefore,
    \begin{align*}
        &\prob{\bar L_{k,j,r}(\cX_n)\ne \bar L_{k,j}(\cX_n)} \le n \prob{R_k(X_1,\cX_n) > r} \\
        &\quad = n\int_\cM \cprob{R_k(X_1,\cX_n)>r}{X_1=x}f(x)dx.
    \end{align*}
    Given $X_1=x$, let $N_x$ be the number among the $(n-1)$ points $\{X_2,\ldots,X_n\}$ which lie in $B_r(x)$. Then $N_x\sim \mathrm{Binomial}(n-1, F(B_r(x)))$. Denote $\mu_x = \mean{N_x}$. Note that from Lemma 4.3 in \cite{Penrose2011LimitTF}, we have
    \begin{align*}
    	\mu_x &= (n-1)F(B_r(x)) \ge C_\mu n r^d, 
    \end{align*}
    for some $C_\mu> 0$, independent of $x$.
     Applying a Chernoff-type bound for the binomial distribution (e.g., Lemma 1.1 in \cite{penrose2003random}), and assuming $n$ sufficiently large so that $\mu_x > 2(k-1)$, then
\[\cprob{R_k(X_1,\cX_n)\!>\!r}{X_1\!=\!x} 
=\prob{N_x \!\le\! k-1} \le e^{- C \mu_x}.
\]
Thus, we conclude that $\prob{\bar L_{k,j,r}(\cX_n)\ne \bar L_{k,j}(\cX_n)} \to 0$, completing the proof.
\end{proof}

\begin{proof}[Proof of Lemma \ref{lem:homogeneous}]
Denote $R_k = R_k(\mathbf{0}, \cH_\lambda)$, and 
$Y_k = \lambda\omega_d R_k^d$, where $\omega_d$ is the volume of a $d$-dimensional unit ball.
Note that the sequence $Y_1,Y_2,\ldots$, can be viewed as the arrival times of a unit-rate Poisson process, and therefore can be written as
$
Y_k = \sum_{i=1}^k Z_i
$,
where $Z_1,Z_2,\ldots$, are independent with $Z_i\sim\mathrm{Exp}(1)$. 
Fix $1\le j < k$, and 
set $\Delta_{k,j} = Y_k-Y_j$. Then $Y_j\sim \mathrm{Gamma}(j)$, and $\Delta_{k,j} \sim \mathrm{Gamma}(k-j)$ are independent, and therefore,
\[
    U_{k,j} := \frac{Y_j}{Y_k} = \frac{Y_j}{Y_j+\Delta_{k,j}}\sim\mathrm{Beta}(j,k-j).
\]
Denoting $L_{k,j} = L_{k,j}(\mathbf{0},\cH_\lambda)$, then by \eqref{eqn:Lkj},
\[
L_{k,j} = \logg\param{\param{\frac{Y_k}{Y_j}}^{1/d}} = \log(d) -\logg(U_{k,j}^{-1})
\]
Taking $C_{k,j} = -\mathbb{E}\{\log\log(U_{k,j}^{-1})\}$, then
\begin{equation}
\begin{split}
&C_{k,j} = -\binom{k-1}{j-1}\int_0^1 \log(-\log(u)) u^{j-1}(1-u)^{k-j-1}du
\\
&= -\binom{k-1}{j-1}\int_0^\infty \log(s) e^{-js}(1-e^{-s})^{k-j-1}ds 
\\
&= -\binom{k-1}{j-1}\!\!\sum_{i=0}^{k-j-1}\!\!(-1)^i\binom{k\!-\!j\!-\!1}{i}\!\!\int_0^\infty\!\!\!\!\! \log(s) e^{-(i+j)s} ds,\\
&= \binom{k-1}{j-1}\!\!\sum_{i=0}^{k-j-1}\!\!(-1)^i\binom{k\!-\!j\!-\!1}{i}\!\frac{\gamma \!+\! \log (i+j)}{i+j}.
\label{eqn:Ckj}
\end{split}\raisetag{20pt}
\end{equation}
The last line follows from $\int_0^\infty\log(s) e^{-(i+j)s}ds = \frac{-\gamma - \log (i+j)}{i+j}$, where $\gamma$ is the Euler-Mascheroni constant  (see 
\gr{Appendix \ref{app:eval}} for the calculation). As a special case, we have $C_{2,1} = \gamma \approx 0.57721$. \end{proof}

\section{Experimental Setup}
\label{sec:exp}
In this section we provide the details for the experiments performed to evaluate the {\name} estimator, and compare it against existing dimensionality estimation methods.

\subsection{Tuning the {\name} Estimator Parameters}
\label{sec:dim_map_construction}
We describe here the \emph{parameter tuning stage} for the {\name} estimator, where we optimize the values of $\alpha_{k,j}$ and $\beta_{k,j}$ in \eqref{eqn:dhat} to adjust for finite sample effects. 
We stress that the estimation is only needed to be carried out \emph{once} for a given choice of $(k,j)$ values and sample size $n$, and can be reused for any given dataset.

For each of choice of values of $n$ and  $(k,j)$, we generated 1,000 point clouds from a $d$-dimensional Gaussian distribution for various values of $d$. Using different distributions may affect the calculated parameter values. The Gaussian distribution was chosen as it consistently yielded the best results across different settings.
This 
approach can be conceived as equivalent to  a grid-search method, and does not impose any prior distribution on the data. 
For each point-cloud, we computed the value of $\bar L_{k,j}$ \eqref{eqn:Lkj_avg}.
Following the discussion in Section \ref{sec:dim_est_method}, we use least-squares regression to find the best fit so that $\bar L_{k,j} \approx \alpha_{k,j}\log(d) +\beta_{k,j}$.
We note that an important part of this tuning stage is the range of dimensions used to fit the values of $\alpha_{k,j},\beta_{k,j}$. This stems from the fact that convergence rates become slower in higher dimensions.
In practice, if there is an approximate range of candidate values, this can be used to improve accuracy. For example, for the benchmark experiments we used $1\le d\le 20$, while for the real-world examples (except for ISOMAP), we used $10\le d \le 40$. 

\subsection{Evaluation Datasets}
\label{sec:datasets}
We tested the {\name} estimator \eqref{eqn:dhat} on three datasets: (1) benchmark manifolds, proposed by Campadelli et al.~\cite{Campadelli}, where the ground truth ID is known; (2) sampling from $d$-dimensional spheres with Gaussian noise; (3) real-world datasets with much higher ambient space dimensionalities, but with only conjectured ID's. 

\subsubsection{Benchmark Manifolds}
We evaluated {\name} on  
a collection of synthetic datasets which is widely used in the ID literature \cite{Campadelli,Facco,denti,binnie2025surveydimensionestimationmethods}. Each manifold has a known ID, embedding (ambient) dimension, and geometric/topological properties that make it suitable for benchmarking different estimators. 
The dataset consists of 24 different manifolds, with ID's ranging from 1 to 70. The embedding dimensions are between 3 and 72.
The geometry of these manifolds range from flat to various degrees of curvature. 
Table \ref{tab:benchmark_manifolds} gives the ambient dimension and underlying ID values for all the benchmark manifolds from \cite{Campadelli} used in our experiments. We refer the reader to \cite{Campadelli} for full details of the geometrical nature of these manifolds.

\begin{table}[h!]
\centering
\begin{tabular}{l r r}
\hline
\textbf{Name} & \textbf{Intrinsic dim.\ ($d$)} & \textbf{Ambient dim.} \\
\hline
\hline
M1\_Sphere & 10 & 11 \\
M2\_Affine\_3to5 & 3 & 5 \\
M3\_Nonlinear\_4to6 & 4 & 6 \\
M4\_Nonlinear & 4 & 8 \\
M5a\_Helix1d & 1 & 3 \\
M5b\_Helix2d & 2 & 3 \\
M6\_Nonlinear & 6 & 36 \\
M7\_Roll & 2 & 3 \\
M8\_Nonlinear & 12 & 72 \\
M9\_Affine & 20 & 20 \\
M10a\_Cubic & 10 & 11 \\
M10b\_Cubic & 17 & 18 \\
M10c\_Cubic & 24 & 25 \\
M10d\_Cubic & 70 & 71 \\
M11\_Moebius & 2 & 3 \\
M12\_Norm & 20 & 20 \\
M13a\_Scurve & 2 & 3 \\
M13b\_Spiral & 1 & 13 \\
Mbeta & 10 & 40 \\
Mn1\_Nonlinear & 18 & 72 \\
Mn2\_Nonlinear & 24 & 96 \\
Mp1\_Paraboloid & 3 & 12 \\
Mp2\_Paraboloid & 6 & 21 \\
Mp3\_Paraboloid & 9 & 30 \\
\hline
\end{tabular}
\caption{List of benchmark manifolds \cite{Campadelli}}
\label{tab:benchmark_manifolds}
\end{table}


For testing the estimator performance, we use Mean Percentage Error (MPE) as the metric.  Given an estimate $\widehat d$ and its corresponding ground truth $d$, the MPE is given by
\[
\mathrm{MPE} = 100\times \frac{|\widehat d - d|}{d}. 
\]
For each manifold, we evaluated the performance on point-clouds of 625, 1,250, 2,500 and 5,000 points, with 20 repetitions each. 
We compared 
{\name} with 14 existing methods 
(see Table~\ref{tab:mpe_benchmark}). For methods with tuneable hyper-parameters, we used the values from \cite{binnie2025surveydimensionestimationmethods}, which optimize the average MPE across all manifolds. 
We evaluated the performance of $\widehat d_{k,j}$ for $(k,j) = (2,1), (4,2),(8,4)$.

Finally, as a ``sanity check", in addition to the benchmark manifolds in \cite{binnie2025surveydimensionestimationmethods}, we tested {\name} on 2,500 point samples from $d$-dimensional spheres, for $10\le d\le 40$.
\subsubsection{Noisy Datasets}\label{subsec:noisy}
In the original benchmark, the data lies     completely on a low-dimensional manifold. Beyond the benchmark, we tested the effect of sampling noise (in the ambient space) on our estimator.
 We used the 6-dimensional sphere $S^6$ embedded in $\mathbb{R}^{11}$, with point cloud of 2,500 points. We added mean zero  Gaussian noise to each
 point, 
    with standard deviations ($\sigma$) 
    of 0.01, 0.1 and 1.0. We performed 500 repetitions per $\sigma$ value. We ran the same experiment for $S^{10}$ embedded in $\mathbb{R}^{11}$. These experimental configurations allow us to compare against other methods using results from \cite{binnie2025surveydimensionestimationmethods} (see supplementary materials).

\subsubsection{Real-World Datasets}
We  evaluated {\name} on the following real-world datasets: the ISOMAP face dataset \cite{ISOMAP}, the  MNIST dataset \cite{mnist}, the CIFAR-100 dataset \cite{CIFAR100} and the Isolet dataset \cite{isolet_54}. 

The ISOMAP face dataset consists of 698 grayscale 64x64 images of a sculpture face obtained from different pose angles and lighting directions, so its ID is 3. 

The MNIST dataset contains 70,000 greyscale images of 28x28 hand-written digits and its true ID  is \emph{not known}. However, as written digits are highly structured, the ID should be substantially smaller than 784. 

The CIFAR-100 dataset \cite{CIFAR100} consists of 60,000 color images, each of resolution 32x32 pixels, resulting in a flattened vector of 3,072 dimensions. 

Finally, the Isolet dataset \cite{isolet_54} consists of features extracted from audio recordings of 150 subjects, pronouncing each letter of the alphabet twice, with 59 examples per speaker, totaling  7,797 examples with each example having 617 features.

To test the effect of sample size, we took 20 random subsets of the point clouds over a range of different subset sizes.

\subsection{Downstream Experiments}
To further evaluate the accuracy of the L2N2 estimates, we conducted a downstream experiment using autoencoders on the MNIST dataset, where no ground truth is available. By testing different numbers of hidden units in the bottleneck layer, we checked for which choice the reconstruction error reaches an optimum, i.e. the smallest number of hidden units which result in a small error. The reasoning being that this should correspond to the intrinsic dimension.  
We used images corresponding to the individual digits. We created an autoencoder model that consisted of 6 layers with the following number of hidden units: $256 \rightarrow 128 \rightarrow B \rightarrow 128 \rightarrow 256 \rightarrow 768$, where $B$ indicates the number of bottleneck hidden units. Mean squared error was used as the loss function for training and evaluation. The values of $B$ were chosen in the range of 13 and 31, since it contains the intrinsic dimensionalities for differet MNIST digits that the methods of TwoNN, GriDE, MLE and L2N2 converge to when all available data is used.

\section{Experimental Results}
\label{sec:res}
In this section, we present  the results for the different experiments: benchmark manifolds (Section \ref{sec:benchmark_results}), noise experiments (Section \ref{sec:noise_res}) and real-world datasets (Section \ref{sec:realworld_results}).
\subsection{Benchmark Manifold Results}
\label{sec:benchmark_results}
The average MPE across all benchmark manifolds for all methods considered is shown in Table \ref{tab:mpe_benchmark}. The MPE values for existing approaches from CDim to WODcap were obtained from \cite{binnie2025surveydimensionestimationmethods}.
It can be seen that 
\name(2,1), i.e. $k\!=\!2,j\!=\!1$, outperforms all other methods across all sample sizes.
Note that all methods in Table \ref{tab:mpe_benchmark} use parameters  optimized for performance on the benchmark. We stress that  for \name, \emph{no such optimization was involved}, yet it yields better results. For a comparison on more equal terms, we also show the results for \name(2,1)(opt),  where  $\alpha_{2,1}$ and $\beta_{2,1}$ were optimized for the benchmark, yielding a substantial additional improvement.
Finally, if we know the space has an integer dimension, we can further increase the accuracy by rounding the estimated dimensions (\name(2,1)(r)). 

\begin{table}[]
    \centering
    \begin{tabular}{|l|r|r|r|r|}
\hline
No. Pts & 625 & 1,250 & 2,500 & 5,000 \\
\hline
\hline
\name (2,1)(r) & {\bf 8.52} & {\bf 6.97} & {\bf 6.06} & {\bf 5.52}  \\
\name(2,1)(opt) & {11.62} & {9.21} & {8.09} & {7.23}\\
\name(2,1) & {13.02} & {11.00}  & {9.96} & {9.28}\\
\name (4,2) & 16.43 & 12.59 & 11.38 & 10.86 \\
\name (8,4) & 19.33 & 17.36 & 12.96 & 11.29\\
CDim \cite{ConicalDim}    & 53.91 & 50.81 & 45.78 & 43.83  \\
CorrInt \cite{grassberger1983characterization}  & 26.32 & 26.47 & 20.99& 19.05  \\
DANCo \cite{Ceruti}   & 27.16 & 22.46 & 12.61 & 17.01  \\
ESS  \cite{ESS} & 23.34 & 16.95 & 17.15 & 14.88  \\
FisherS \cite{FisherS}  & 40.92 & 42.09 &  42.24 & 42.26  \\
GriDE \cite{denti}     & 13.86 & 11.89 & 10.98 & 10.12  \\
kNN  \cite{knn}     & 16.75 & 33.09 &  14.25 & 9.92  \\
lPCA\cite{Fan2010IntrinsicDE}       & 32.29 & 23.06 & 23.07 & 18.40  \\
MIND ML\cite{mindml}    & 13.36 & 12.90 & 20.43 & 20.28  \\
MLE \cite{levina}       & 22.27 & 20.33 & 15.96 & 11.73  \\
PH  \cite{PH}    & 18.89 & 15.97 & 12.52 & 10.69  \\
TLE \cite{TLE}       & 24.44 & 22.69  & 20.57 & 14.65  \\
TwoNN \cite{Facco}     & 13.54 & 12.05 & 10.99 & 10.21  \\
WODcap \cite{WODCAP}  & 41.88& 40.49 & 40.63 & 37.47  \\
\hline
    \end{tabular}
    \caption{MPE for benchmark manifolds. The suffix ``(r)" indicates that the estimated dimensions were rounded to the nearest integer, and ``(opt)'' indicates that the parameter values were optimized for the benchmark. Note the results reported for all methods other than {\name} were optimized as well \cite{binnie2025surveydimensionestimationmethods}.}
    \label{tab:mpe_benchmark}
\end{table}

The next two methods with the best performance are the TwoNN and GriDE, which are of a similar nature to {\name}. A more detailed picture 
can be seen in the MPE scores across the individual benchmark manifolds, shown in Fig.\,\ref{fig:indiv_manifold_mpe}\,(a). We observe that {\name} performs better on non-linear manifolds, in particular when the ID is high. An exception to this is the M8\_nonlinear dataset. When we do not round our estimated dimensions, other methods are better 
when the ID is very low, such as $d=1$ or $d=2$. Here, slight deviations from the ground truth can produce large MPEs (e.g., an estimate of $\widehat d=1.1$ when $d=1$ gives an MPE of 10). However, rounding small dimensions  almost always reveals the correct dimension, see Fig.\,\ref{fig:indiv_manifold_mpe}(b). This explains the significant improvements obtained when we use dimension rounding.

\begin{figure*}
    \centering
    \includegraphics[width=0.9\linewidth]{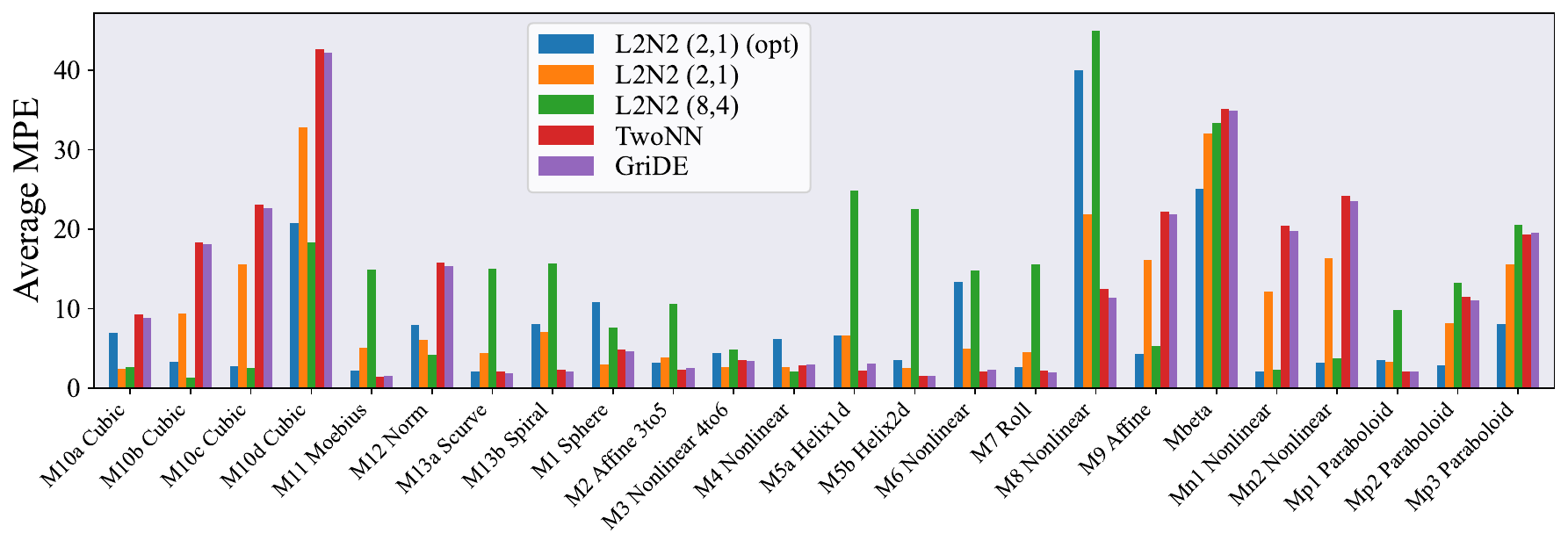} 
    \\
    \vspace{-0.5cm}
    (a) \\
    \vspace{0.3cm
    }
    \includegraphics[width=0.9\linewidth]{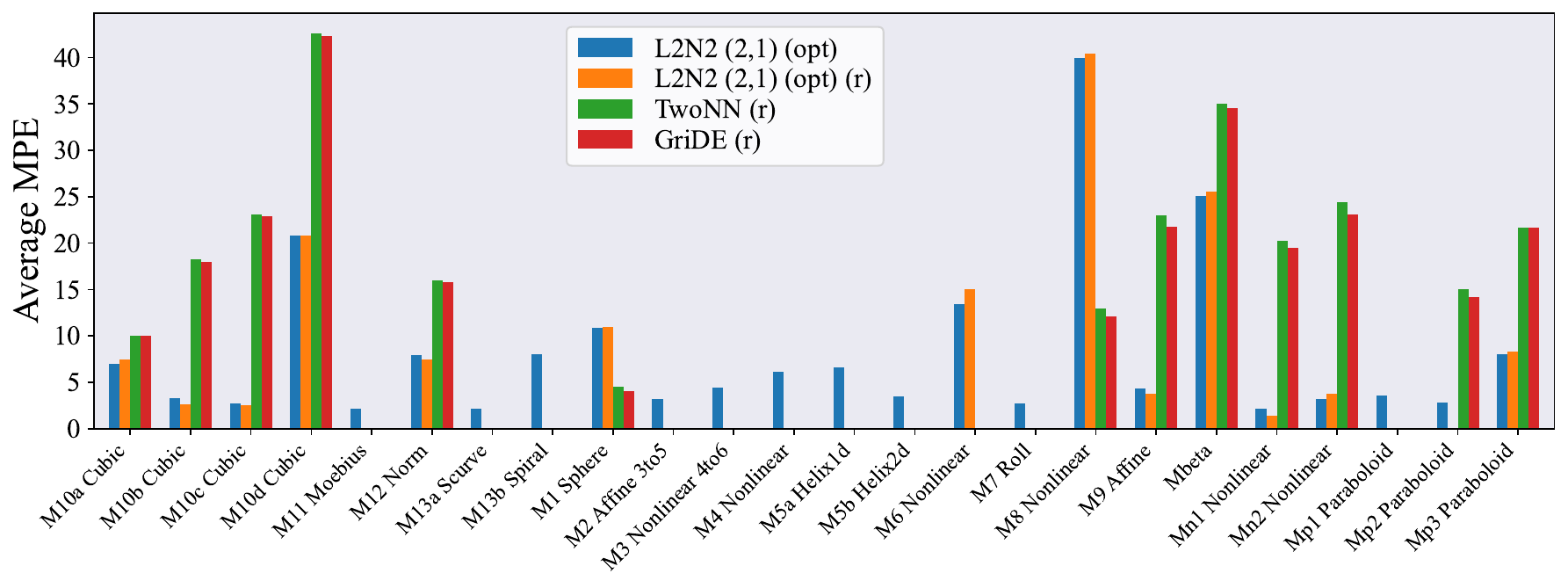} 
    \\
    \vspace{-0.5cm}
    (b) 
    \caption{(a) MPE scores of {\name} with $(k,j)=(2,j)$ and 
    $(8,4)$, TwoNN, and GriDE across the individual benchmark manifolds for 2,500 points. (b) Comparing the MPE  when dimensionality rounding is used. {\name} without rounding is here added for reference.
    }
    \label{fig:indiv_manifold_mpe}
\end{figure*}

The results for $d$-dimensional spheres are presented in Fig.\,\ref{fig:sph_comparison}.
 We see that {\name} accurately recovers the true dimensionality of the spheres, whereas TwoNN, GriDE and MLE yield systematically lower estimates, with the gap increasing with the dimensions.

 \begin{figure}
    \centering
    \includegraphics[width=.9\linewidth]{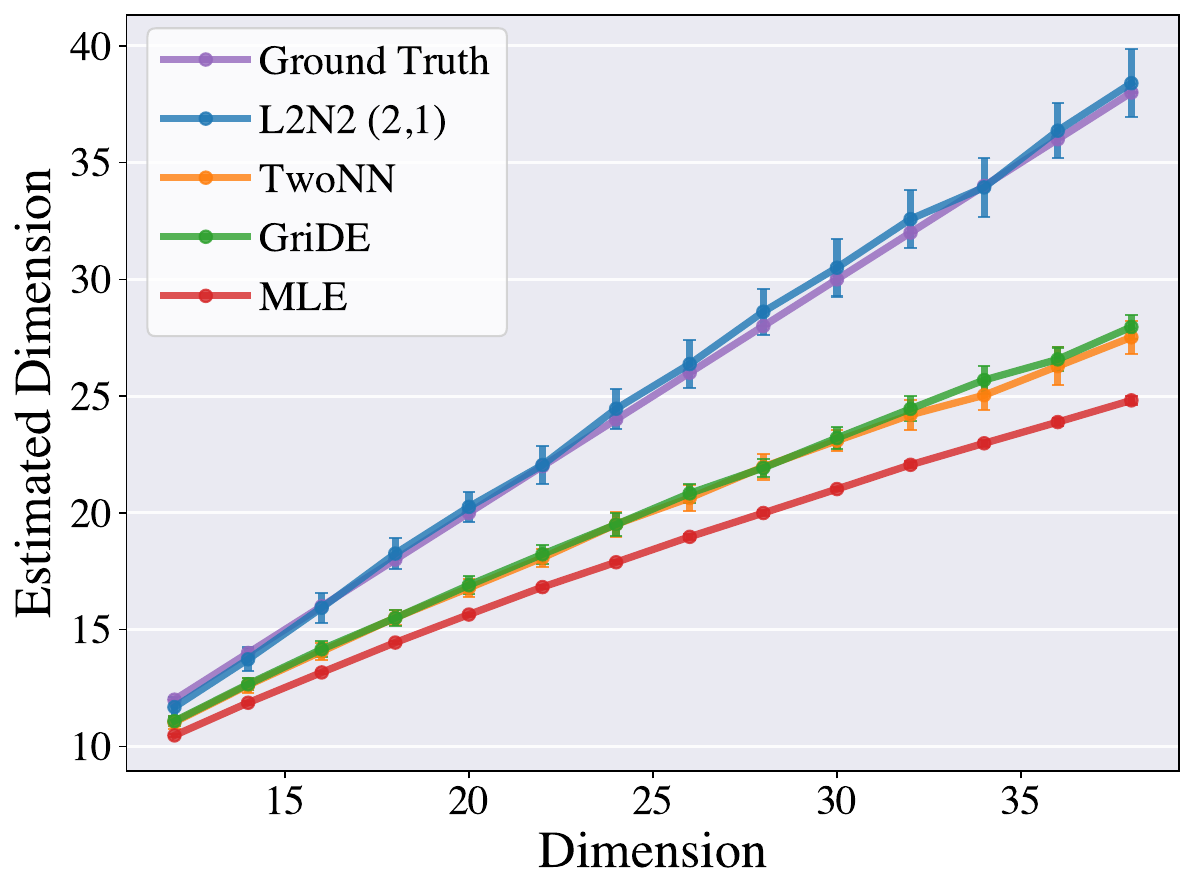}
    \caption{Comparison of estimated ID of $d$-dimensional spheres for different methods and dimensions with the ground truth shown.}
    \label{fig:sph_comparison}
\end{figure}

\subsection{Noise Experimental Results}
\label{sec:noise_res}
Table \ref{tab:noise_exp_res1} and \ref{tab:noise_exp_res2} presents the full results for the noise experiments for other dimensionality estimation approaches. It appears that all methods are sensitive to noise, and L2N2 performs competitively with the best of them.
%
In both cases, the estimated dimensions 
increase with the amount of noise, as expected. The increase in {\name} is comparable to the other methods. Generally, estimating the ID from noisy data is extremely challenging, and to some extent an ill-defined problem, as the dimension of the support of the distribution is the same as the ambient space. We leave it as future work to see if we can improve our estimates from noisy data.

\begin{table}[]
    \centering
    \begin{tabular}{|l|r|r|r|}\hline
         Method & $\sigma = 0.0$ & $\sigma = 0.01$ & $\sigma = 0.1$ \\
         \hline
        L2N2(2,1) &  6.10  & 6.16  & 8.48  \\
lPCA& 7.00 & 7.00 & 11.00  \\
MLE& 5.89 &  5.92  &8.04  \\
PH& 5.94 & 5.97  &8.21  \\
KNN& 6.00 & 5.97  &9.17  \\
WODCap& 6.93 & 6.93& 6.93  \\
GRIDE& 5.86  &8.60  &10.29 \\
TwoNN& 5.85 & 8.58 & 10.23 \\
DANCo& 6.92 & 8.95 & 11.00 \\
MiND ML& 6.00 & 9.00 & 10.00\\
CorrInt &{5.82} & 7.48 & 9.00 \\
ESS& 6.07 & 7.87 & 10.36  \\
FisherS &6.98 & { 5.63}   & {5.82} \\
TLE &6.28 & 7.83& 9.74 \\
\hline
    \end{tabular}
    \caption{Average estimated dimensions for the noise experiments where $S^6$ is embedded into $\mathbb{R}^{11}$ with mean zero, variance $\sigma^2$ Gaussian noise (in ambient space dimension) 
    added. 
    }
    \label{tab:noise_exp_res1}
\end{table}

\begin{table}[]
    \centering
    \begin{tabular}{|l|r|r|r|}\hline
    Method & $\sigma : 0.0$ & $\sigma : 0.01$ & $\sigma : 0.1$\\
         \hline
    L2N2(2,1) & 10.05 & 10.02 & 10.47  \\
        lPCA & 11.00  & 11.00 & 11.00  \\
MLE & 9.26 &  9.25 &  9.61 \\
PH & 9.38  & 9.34 &  9.78   \\
KNN & 9.91 &  9.99 &  10.29  \\
WODCap & 9.15 &  9.15 &  9.15 \\
GRIDE & 9.40 &  9.77 &  10.48  \\
TwoNN & 9.40 &  9.82 &  10.51  \\
MiND ML & 9.40  & 10.00 &  10.00 \\
CorrInt & { 9.10} &  9.32 &  9.22  \\
ESS & 10.00 &  10.27 &  10.65 \\
FisherS & 11.00 &  { 7.87} & { 5.82} \\
DANCo & 10.90 &  11.00 &  11.00  \\
TLE & 10.10 &  9.89 &  10.02 
\\
\hline
    \end{tabular}
    \caption{Average estimated dimensions for the noise experiments where $S^{10}$ is embedded into $\mathbb{R}^{11}$ with mean zero, variance $\sigma^2$ Gaussian noise (in ambient space dimension) 
    added. 
    }
    \label{tab:noise_exp_res2}
\end{table}


\subsection{Real-World Dataset Results}
\label{sec:realworld_results}
The results for the ISOMAP dataset are in Fig.\,\ref{fig:MNIST_results}(a), comparing  
{\name} with TwoNN, and GriDE.
We can see that as the sample size increases, the predicted dimensionality of both methods approach the commonly accepted ground truth of 3. When all
points are used, the {\name} estimate is closer to 3, compared to the others.

The results for the MNIST datasets for the digit ``1'' are in Fig.\,\ref{fig:MNIST_results}\,(c).
We find that {\name}  estimates  are higher than the others. However, the changes in the estimates as the number of points varies follow a similar trend across both methods. Other digits behave similarly and can be found in the supplementary materials.

The results for the CIFAR-100 dataset is shown in Fig.\,\ref{fig:MNIST_results}(d).
As with MNIST, our estimates are higher than those from other methods.

For the Isolet dataset, the results are in Fig.\,\ref{fig:MNIST_results}(b). We observe that as the number of points increases, the estimated dimensionality decreases, with similar values for {\name}, TwoNN, and GriDE (between 10-12). 

We note that for the MNIST, CIFAR-100, and Isolet datasets, the ID estimates produced by {\name} are consistently higher than those obtained from TwoNN and GriDE. However, it is known that these methods tend to underestimate intrinsic dimensionality, especially for higher dimensions. This  is clearly illustrated in Fig.\,\ref{fig:sph_comparison}, where {\name} accurately recovers the correct dimension, while TwoNN and GriDE yield systematically lower estimates. 

\begin{figure*}
    \centering
    \begin{tabular}{cccc}
   \hspace{-0.35cm} 
\includegraphics[width=0.25\linewidth]{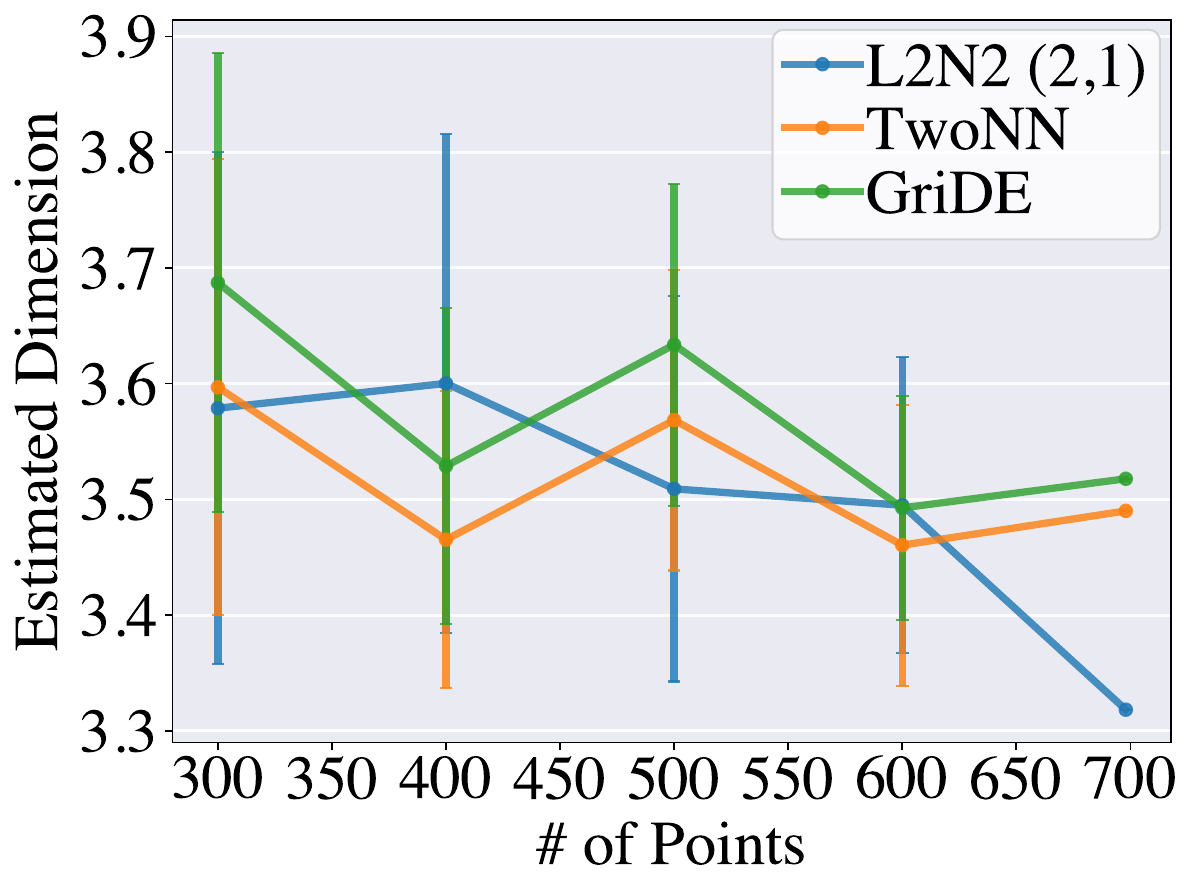} &\hspace{-0.5cm}
\includegraphics[height=94.35774pt]{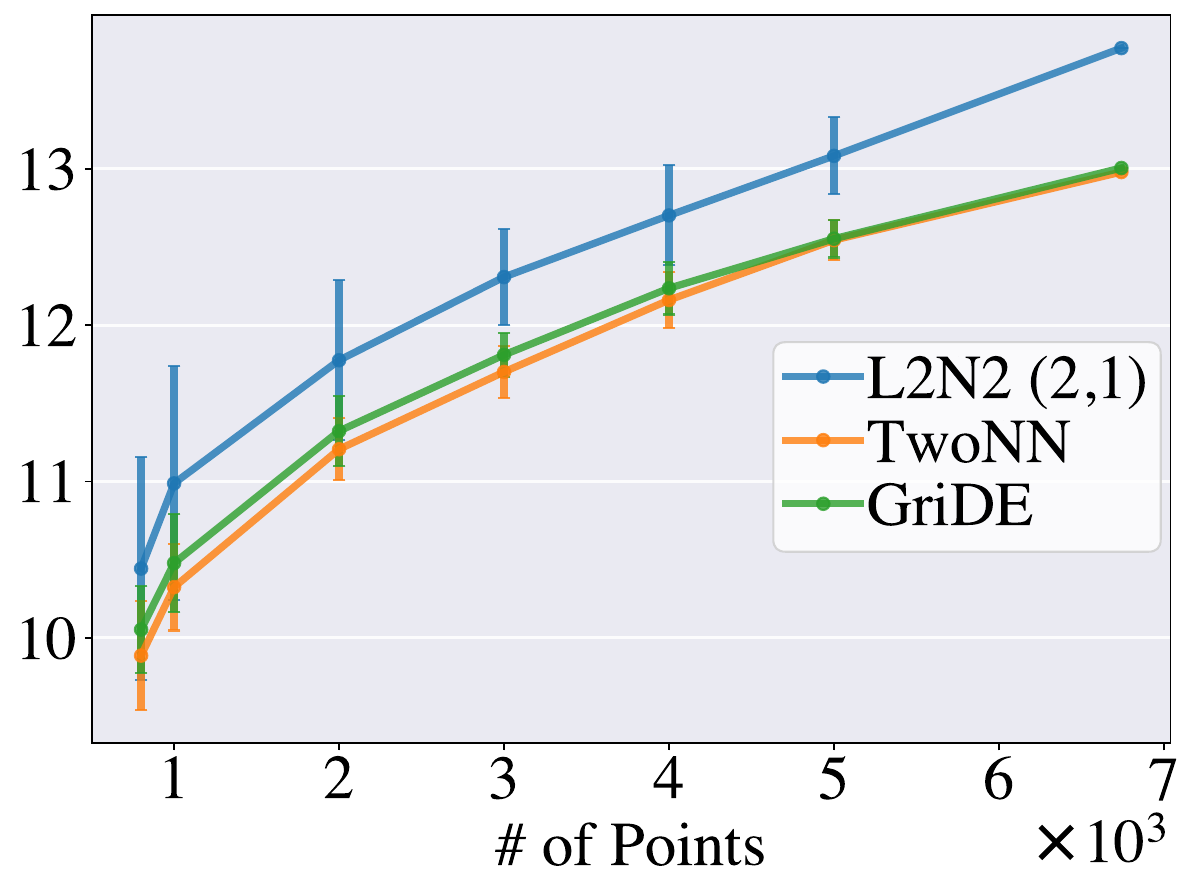} &\hspace{-0.5cm}
    \includegraphics[height=93.13774pt]{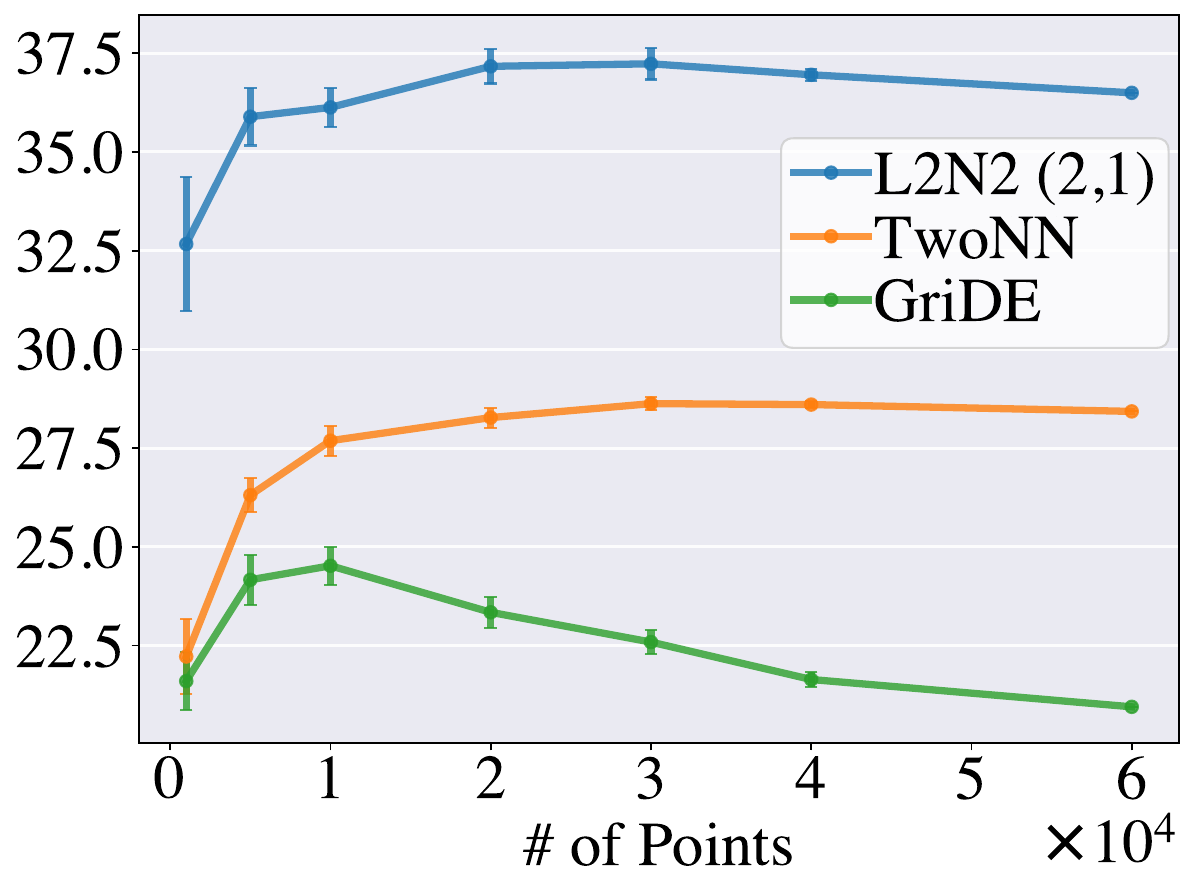} 
&\hspace{-0.5cm}
    \includegraphics[height=93.13774pt]{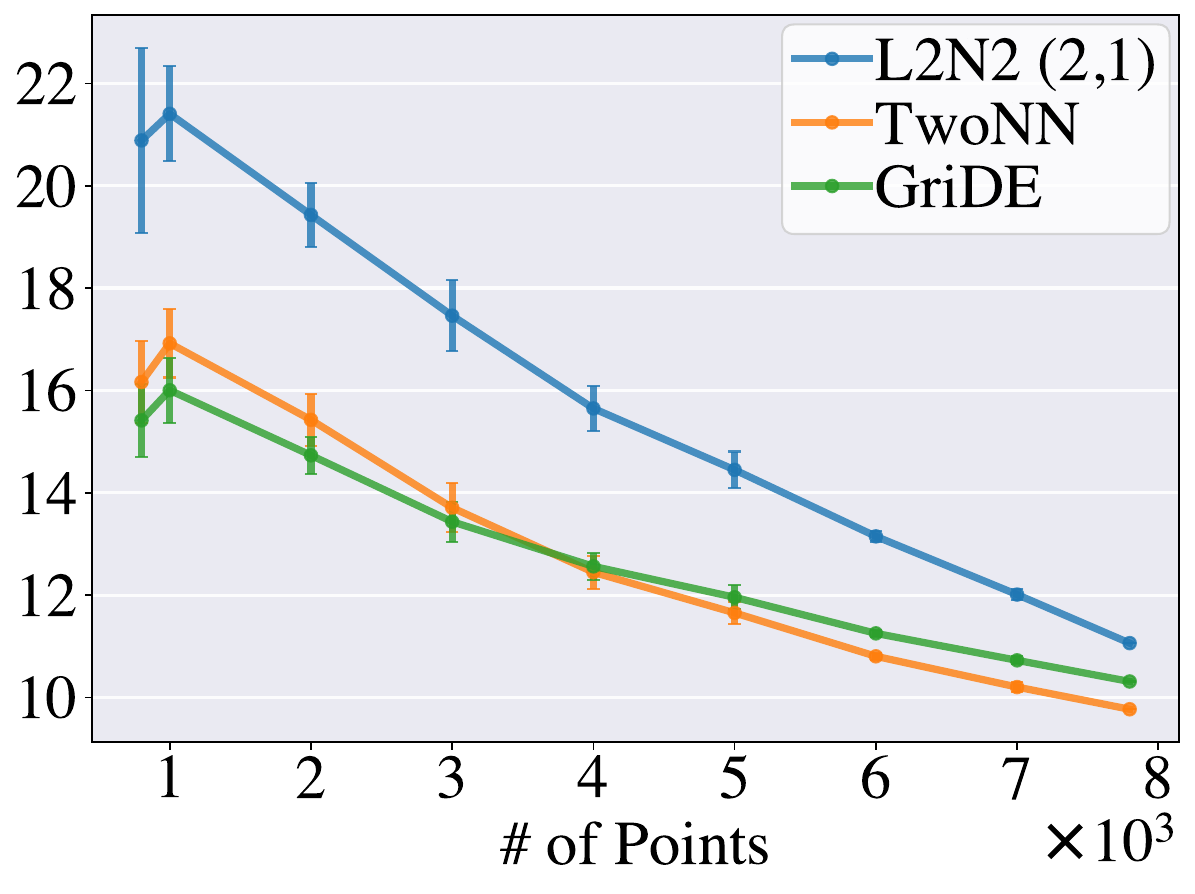} 
     \\   (a) ISOMAP & (b)  MNIST ``1"  & (c) CIFAR-100& (d)  Isolet
    \end{tabular}
    \caption{Estimated ID for real datasets. (a) ISOMAP faces;  (b)  digit ``1" in MNIST (other digits can be found in the supplemental material);   (c) CIFAR-100; (d)  Isolet dataset.  Bars denote $\pm 1$ standard deviation.}
    \label{fig:MNIST_results}
\end{figure*}

\subsection{Downstream Experiments Results}
The results can be seen in Fig. \ref{fig:autoencoder}. 
The ID estimated by TwoNN had reconstruction errors (average 0.0143) approximately 21\% higher than those obtained with L2N2 (average 0.0119)
Moreover, increasing the bottleneck beyond the L2N2 estimate does not improve performance any further, providing strong evidence that L2N2 estimates are quite accurate.

\begin{figure}
    \centering
    \includegraphics[width=\linewidth]{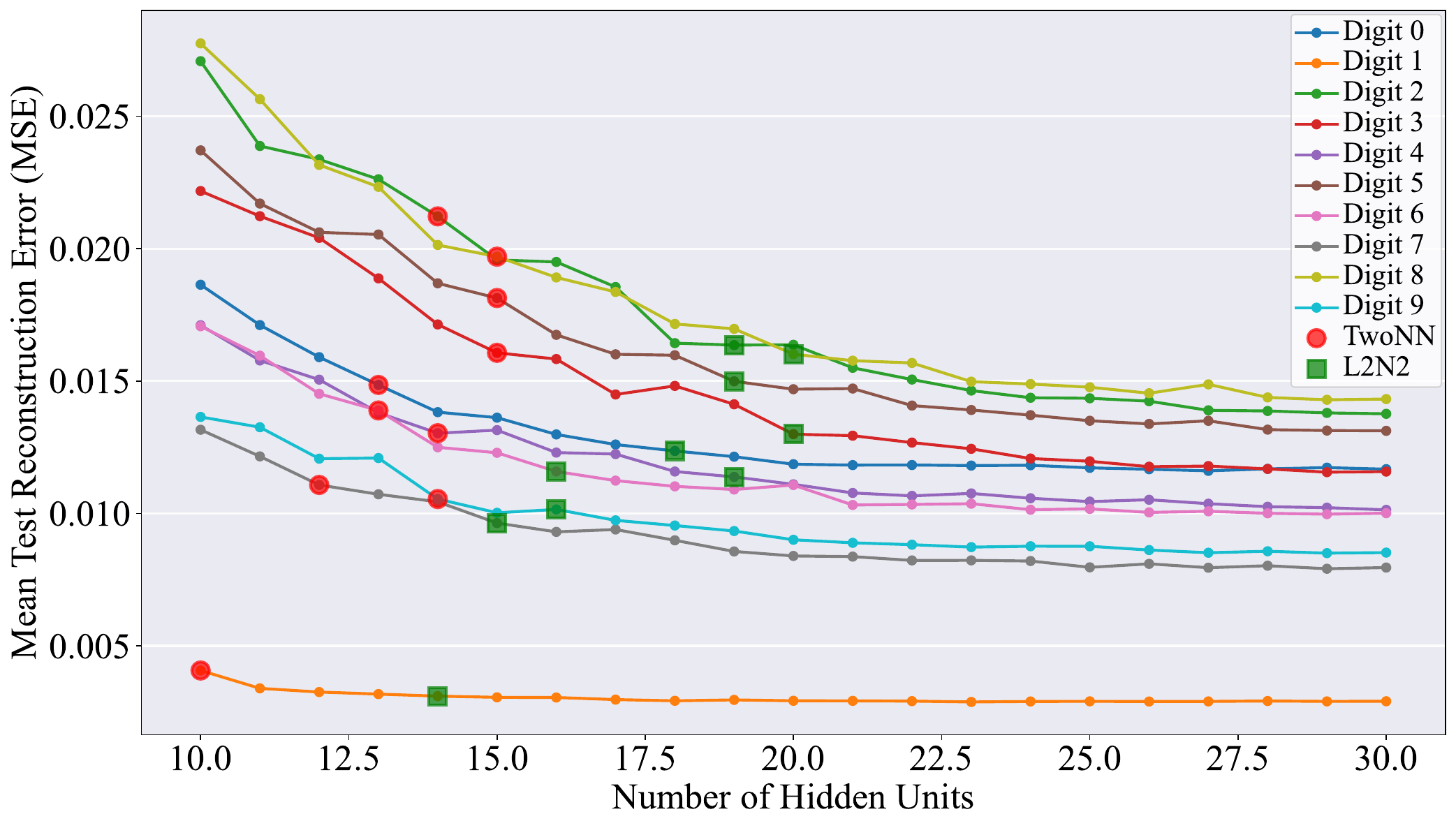}
    \caption{Downstream experiments results: Reconstruction errors (MSE) of autoencoders with bottleneck layers set to different values. }
    \label{fig:autoencoder}
\end{figure}
\vspace{5pt}

\section{Discussion}
\label{sec:exp_discussion}
In this section we provide a few further insights about {\name} and the results presented above.

\vspace{5pt}
\noindent {\bf Nearest-Neighbor Configurations:} An important observation from the benchmark results is that the simplest ratio configuration $(k,j) = (2,1)$ consistently gives the best results (here we presented $(4,2)$ and $(8,4)$, and see the supplementary materials for more examples). An explanation for this can be seen in Fig.\,\ref{fig:indiv_manifold_mpe}\,(a). We find that using ratios from more distant neighbors negatively impacts the MPE for low dimensional manifolds such as the one-dimensional helix. This is likely due to the low number of points available (2,500), causing the 8-th nearest neighbor to lie on another part of the helix, and thus skewing the estimated dimensionality. In future work, we plan to explore in what cases, if any, other choices for $(k,j)$ are preferable.   

\vspace{5pt}
\noindent{\bf Finte Sample Effects:}
Lemma \ref{lem:homogeneous} shows that the estimator is asymptotically unbiased at the log-scale.
Finite-sample deviations arise because the analysis relies on the  local homogeneity of the point process in the limit, which is rarely met in the finite setting. A theoretical analysis of the finite-sample bias remains as future work. Nevertheless, our experiments show that L2N2 exhibits much lower bias than competing methods (Fig. \ref{fig:sph_comparison}). 
More details and results of the effect of samples sizes on the coefficient estimates ($\alpha_{i,k}$ \& $\beta_{i,k}$) can be found in Appendix \ref{sec:finite_samp_analysis} .

\vspace{5pt}
\noindent{\bf Subsampling for Scaling:} For very large samples, we propose the following 
 efficient strategy. We replace \eqref{eqn:Lkj_avg} with the average value of $L_{k,j}$ taken over a small query subset. Note that the individual $L_{k,j}$ values are still computed with respect to the full dataset.
 The results when subsampling is used on the benchmark manifold experiments
 can be found in Table \ref{tab:subsamp_res}. It can be seen that subsampling in this manner increased the MPE by around 0.34\%-0.56\%, but with significant reduction in computational complexity.
 For comparison, when only 2,500 points are available in total, the MPE increases to 9.96\%. However, in this case, this computational optimization is unnecessary. 
 \begin{table}[]
    \centering
    \begin{tabular}{|c|c|c|c|c|}
    \hline
         No. Pts ($\times 10^3$) & $15$& $50$ & 100 & 500  \\
         \hline             
         \hline
         (SubSamp) MPE & 8.50 & 7.67 & 7.61 & 6.88 \\ 
         (Full Set) MPE & 8.15 & 7.33 & 7.05 & 6.41 \\
         \hline
         MPE Diff. & 0.45 & 0.34 & 0.56 & 0.47 \\
         \hline
    \end{tabular}
    \caption{Benchmark manifold MPE from the subsampling experiments (no rounding or optimization). (Full Set) indicates that all points are used. (SubSamp) indicates that a random subset of 2,500 points are used. }
    \label{tab:subsamp_res}
\end{table}

\vspace{5pt}
\noindent{\bf Comparison with the MLE:}
Earlier in the paper, we mentioned that the MLE (Levina-Bickel estimator) has similar properties to the {\name}, including that it universally converges to $d$ \cite{Penrose2011LimitTF}. However, 
the MLE performs significantly worse than  {\name} in practice.
The MLE results from \cite{binnie2025surveydimensionestimationmethods}
where nearest neighbor values of 
$k = 10, 20, 40$ and $80$ were tried. It was consistently found that $k=10$ yielded the best results for the MLE, which are the ones used 
in Table \ref{tab:mpe_benchmark}; larger $k$ did not yield better results.
Below we give a possible intuitive explanation for the gap in performance.

Consider the special case ($k=2$) of the MLE, given by
\[
\widehat d_{\mathrm{MLE}}(\cX_n) = \frac{1}{n}\sum_{i=1}^n \frac{1}{\log(R_2(X_i,\cX_n)/R_1(X_i,\cX_n))}.
\] 
Thus, we can write both estimators as
an average 
\[\frac1n\sum_{i=1}^n g(\log(R_2(X_i,\cX_n)/R_1(X_i,\cX_n)),\]
where for the MLE estimator we take $g(x) = 1/x$, while for the {\name} we take $g(x) = \log(1/x)$. 
We suggest that the inferior performance of the MLE  stems from the dynamic range of $1/x$,
which explodes for small $x$ and vanishes for large $x$. By contrast,
$\log(1/x)$ spreads mass more evenly across $\R$, suppressing both extremes.



\vspace{5pt}
\noindent{\bf Runtime Complexity:}
All the experiments in this paper were carried out on the CPU of the same Dell Desktop (Dell tower EBT2250, Intel Ultra 7, 16GB RAM). The main computational cost is the k-NN calculation (as in MLE, TwoNN, Gride). Computing the mean loglog-ratio scales linearly with the sample size, with minimal memory overhead. In practice, L2N2 is consistently faster due to its simple mean-based estimate. On benchmark manifolds, average runtime is 13/23/56 ms for 2.5k/5k/10k points, compared to TwoNN 76/303/1243 and MLE 23/53/139.

\vspace{5pt}
\noindent {\bf Future Work:} While {\name} provides generally accurate estimates,  the experiments do show significant effects from small sample sizes particularly in high intrinsic dimensions. In the future, we plan to to investigate more principled approaches to improve accuracy on small samples.
One such direction is to consider the entire distribution of $\bar L_{k,j}$ rather than its mean value. We believe it is possible to prove that the limiting distribution is universal, and to find exactly what this distribution is.
Another approach will be to devise a more systematic way to tune the parameters $\alpha_{k,j},\beta_{k,j}$, that will take the sample size and the different ID ranges into account.
Finally, we also plan to investigate the properties of the {\name} estimator for different values of $(k,j)$ in order to better understand in what cases can we benefit from using indexes other than $(2,1)$.

\appendices

\section{Evaluating the Integral in $C_{k,j}$} \label{app:eval}

In this section, we derive the last step for obtaining (3.2) by showing the following:

\begin{equation}
    \label{eqn:integral}
\int_0^\infty \log(s) e^{-(i+j)s}ds = \frac{-\gamma-\log(i+j)}{i+j}
\end{equation}
where $\gamma\approx 0.5772$ is the Euler–Mascheroni constant and $\log$ is the natural logarithm. This uses the Leibniz integral rule (also known as ``differentiating under the integral'').
Let $\phi=i+j$ and define
\begin{align*}
I(z,\phi) &= \int_0^\infty  s^{z-1} e^{-\phi s} ds \\
[u=\phi s \Rightarrow  ]&= \phi^{-z} \int_0^\infty  u^{z-1} e^{-u} du  = \phi^{-z}\; \Gamma(z),
\end{align*}
where $\Gamma$ is the Gamma function.
Taking the partial derivative with respect to $z$, on one side we obtain
\begin{align*}
\frac{\partial I(z,\phi)}{\partial z}  &= 
\frac{\partial }{\partial z}\left(\phi^{-z}\; \Gamma(z) \right) \\ 
&=\phi^{-z}\frac{\partial \Gamma(z)}{\partial z} - \phi^{-z}  \Gamma(z)\log(\phi). 
\end{align*}
On the other side, we obtain
\begin{align*}
\frac{\partial I(z,\phi)}{\partial z} &=\int_0^\infty \frac{\partial} {\partial z}\left(  s^{z-1} e^{-\phi s} \right)ds\\
&= \int_0^\infty s^{z-1}\log(s) e^{-\phi s} ds. 
\end{align*}
Setting $z=1$ and equating the two, we obtain 
\begin{align*}
 \int_0^\infty\log(s) e^{-\phi s} ds  &= \phi^{-1}\left(\left.\frac{\partial \Gamma(z)}{\partial z}\right|_{z=1} -  \Gamma(1)\log(\phi) \right)\\
 &=\frac{-\gamma - \log \phi}{\phi},
\end{align*}
as $\frac{\partial \Gamma(z)}{\partial z}$ evaluated at 1 is the negative of the Euler–Mascheroni constant, i.e. $-\gamma$ and $\Gamma(1)$ is 1. Substituting in $\phi=i+j$ gives the result as required.

Finally, we use \eqref{eqn:integral} in (3.2) to obtain
\[C_{k,j} = \binom{k-1}{j-1}\!\!\sum_{i=0}^{k-j-1}\!\!(-1)^i\binom{k\!-\!j\!-\!1}{i}\!\!\left(\frac{\gamma + \log (i+j)}{i+j}\right) . 
\]
We observe that for $k=2, j=1$, the sum simplifies and we obtain
$C_{2,1} = \gamma$. More generally, if $k = j+1$ then $C_{k,j} = \gamma + \log (j).$
A few other values of $C_{k,j}$ for different values of $k,j$ are given in the following table:
\begin{center}
\begin{tabular}{|c|c|c|}
\hline
$k$ & $j$ & $C_{k,j}$ \\
\hline\hline

2 & 1 & 0.57722 \\
3 & 1 & -0.05796 \\
5 & 1 & -0.14338 \\
5 & 2 & 0.03536 \\
6 & 3 & 0.14185 \\
8 & 4 & 0.10242 \\
8 & 7 & 2.52310 \\
10 & 8 & 0.85720 \\
\hline
\end{tabular}
\end{center}

\section{Additional MNIST Experimental Results}
The results for the digits ``2'', ``3'' and ''4'' of the MNIST dataset can be found in Fig.\,\ref{fig:mnist_234}. We find results that are consistent with Digit ``1'' in the main paper, where our predicted ID values are consistently higher than those of TwoNN and GriDE. We note that the gap between our predictions and TwoNN, GriDE is consistent with that shown in Fig.\,3 of the main paper, suggesting a possible underprediction of ID by TwoNN and GriDE.
\begin{figure*}
    \centering
    \begin{tabular}{ccc}
       \includegraphics[width=0.33\linewidth]{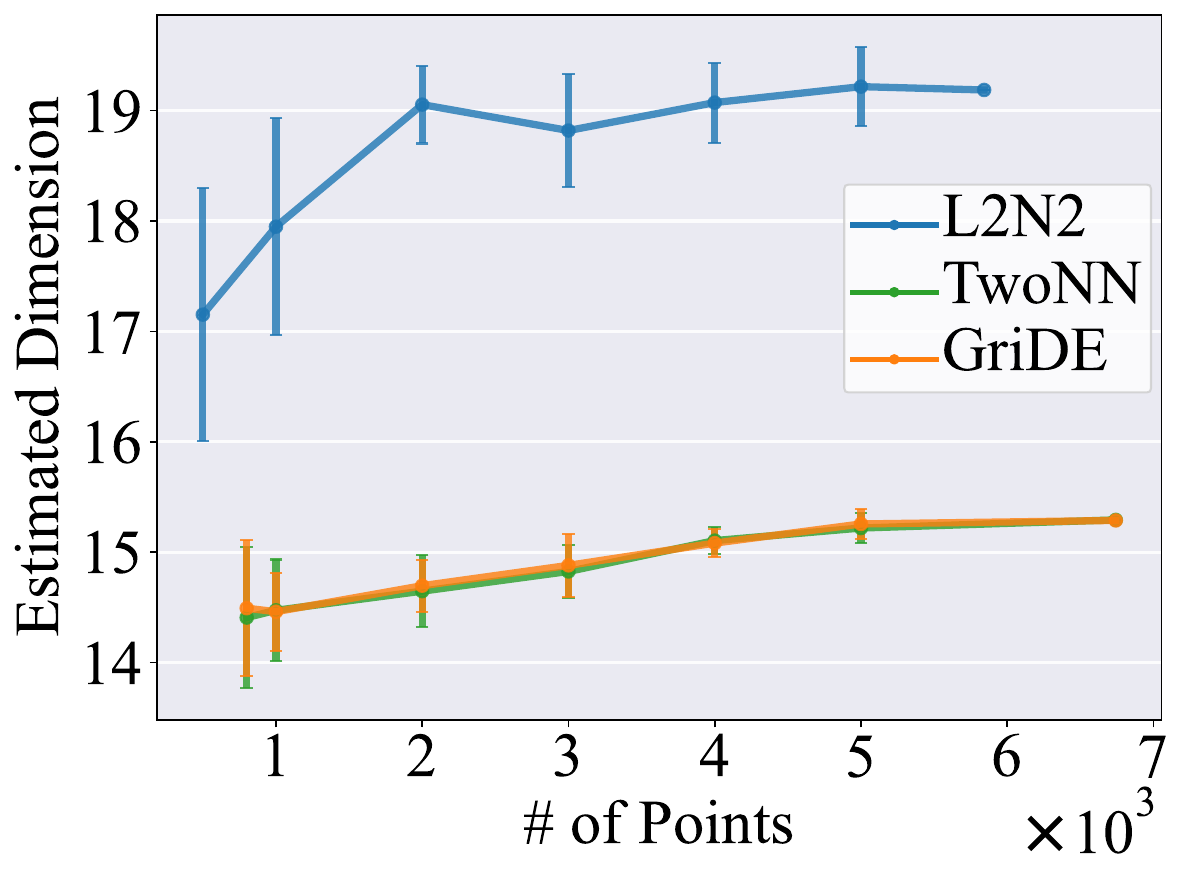}  &  \includegraphics[width=0.33\linewidth]{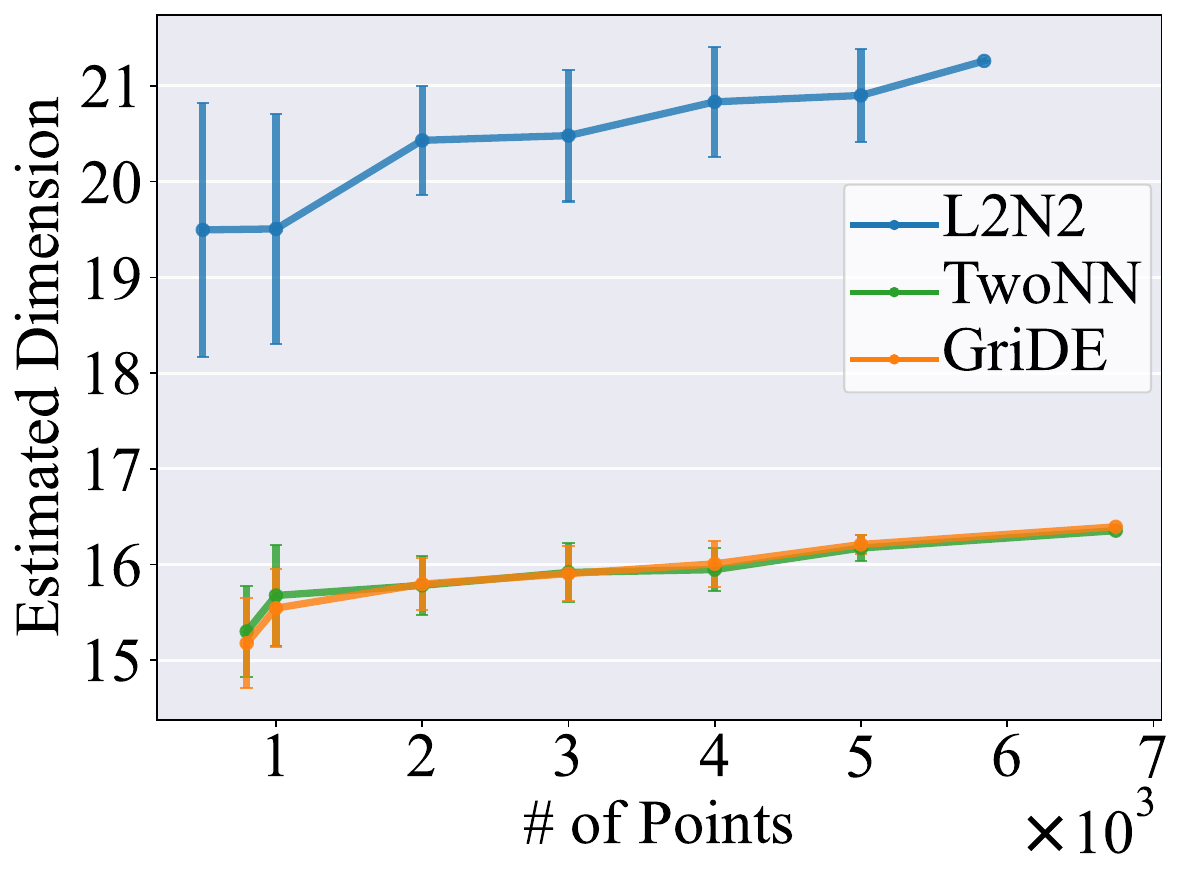} & \includegraphics[width=0.33\linewidth]{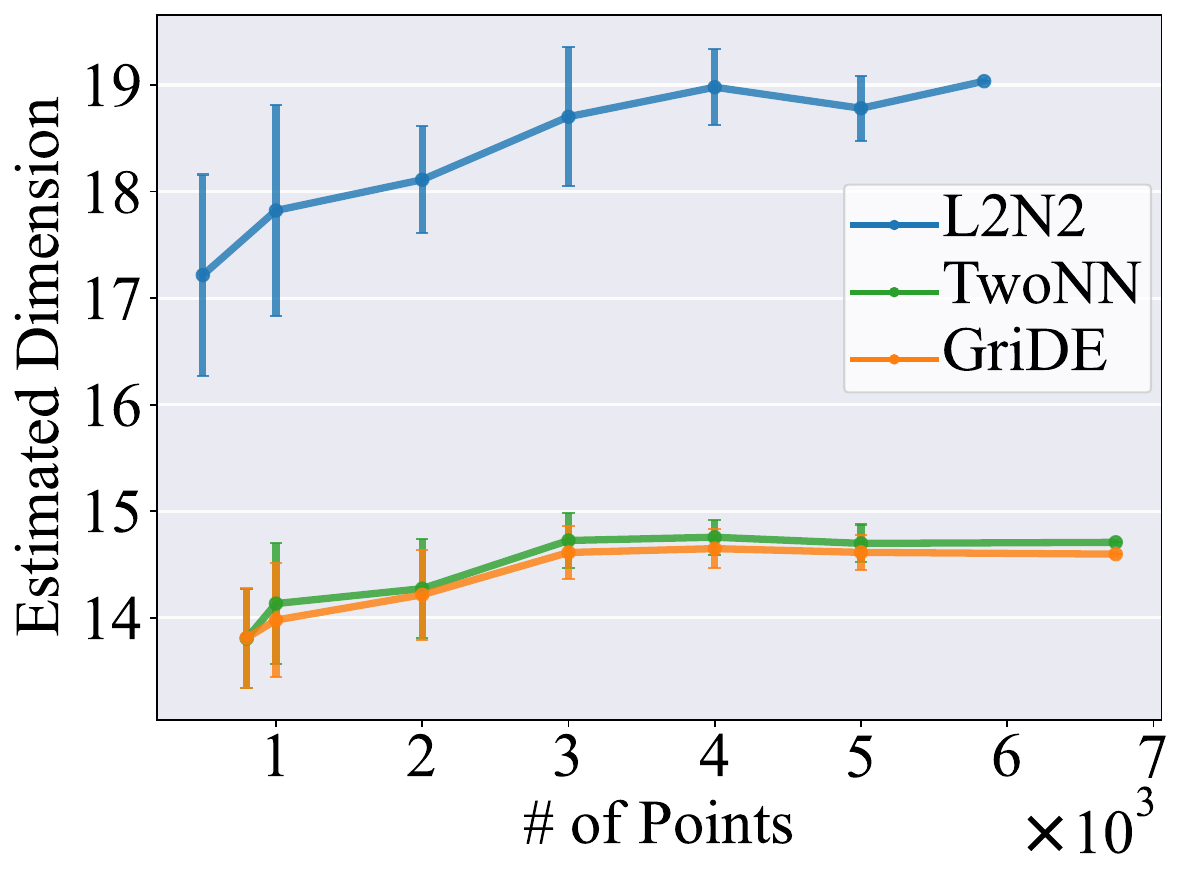}\\
        Digit ``2'' & Digit ``3'' & Digit ``4'' 
    \end{tabular}
    \caption{Results for digits ``2'',``3'', and ``4'' from the MNIST dataset. The error bars represent the standard deviation.}
    \label{fig:mnist_234}
\end{figure*}

\section{Empirical values for $\alpha^{(n)}_{2,1}$ and $\beta^{(n)}_{2,1}$}
\label{sec:finite_samp_analysis}
In this section, we provide more insight into the estimated values of $\alpha^{(n)}_{2,1}$ and $\beta^{(n)}_{2,1}$. We show how the number of points $n$, the choice of dimensions used in the estimation, and the models used affect the estimated values. For each value, we generated 50 point clouds for a range of dimensions and used estimated the parameters using linear regression. 

In Table~\ref{tab:gauss_1_20}, we show the estimated values for the $d$-dimensional Gaussian for $n$ ranging from 100 to 500,000. The values were estimated using dimensions ranging from 1 to 20 as in the benchmark experiments. For samples larger than 2,500 points, we estimate the mean by computing the average of the $\log \log$ of the nearest neighbor ratios of 2,500 randomly chosen points (with distances  computed with the all the points). 
For 500,000 points, the values of $\alpha^{(n)}_{2,1}$ and $\beta^{(n)}_{2,1}$ are close to the theoretical values of  1 and $\gamma=0.57721$,  respectively.
We also show the estimated parameters when the range of dimensions used was 10--40 in Table~\ref{tab:gauss_10_40}. 
In this case, the values are significantly different from the theoretical values but we do see $\alpha^{(n)}_{2,1}$ slowly increasing toward 1 and $\beta^{(n)}_{2,1}$ decreasing.  This indicates that the convergence rate is slower in higher dimensions. While this is not surprising, as the experiments show, one method of mitigating these effects is to estimate the values using an approximate range of dimensions. This can also be seen clearly in Fig. \ref{fig:gauss_diff_ranges}, where we plot the estimated values for using different dimension ranges for the $d$-dimensional Gaussian. While all the values are converging, it is significantly slower if we include higher dimensions. The error bars represent the standard deviation.

Finally, we illustrate the effect of different sampling models in Fig.~\ref{fig:comp_1_20} and~\ref{fig:comp_10_40}. In both cases the trend toward the theoretical values is clear, but in Fig.\,\ref{fig:comp_1_20} for 1000 points and up, the Gaussian seems to converge faster.

\begin{table}[h!]
\centering
\caption{\textbf{$d$-dimensional Gaussian}: Dimensions 1--20 used }
\label{tab:gauss_1_20}
\begin{tabular}{|c|c|c|}
\hline\rule[-1.2ex]{0pt}{4ex}
\# pts & $\alpha^{(n)}_{2,1}$ & $\beta^{(n)}_{2,1}$\\
\hline
\hline
100 & 0.843130 $\pm$ 0.030100 & 0.774468 $\pm$ 0.063900 \\
250 & 0.880452 $\pm$ 0.024624 & 0.730182 $\pm$ 0.052274 \\
500 & 0.904643 $\pm$ 0.028951 & 0.703754 $\pm$ 0.061459 \\
750 & 0.908416 $\pm$ 0.023954 & 0.702147 $\pm$ 0.050852 \\
1,000 & 0.911129 $\pm$ 0.025758 & 0.703216 $\pm$ 0.054681 \\
2,000 & 0.928568 $\pm$ 0.021527 & 0.675884 $\pm$ 0.045700 \\
2,500 & 0.929932 $\pm$ 0.022381 & 0.675130 $\pm$ 0.047513 \\
5,000 & 0.939984 $\pm$ 0.019841 & 0.666099 $\pm$ 0.042119 \\
10,000 & 0.947112 $\pm$ 0.019408 & 0.656437 $\pm$ 0.041201 \\
25,000 & 0.961028 $\pm$ 0.016408 & 0.633636 $\pm$ 0.034833 \\
50,000 & 0.965780 $\pm$ 0.015902 & 0.631085 $\pm$ 0.033758 \\
100,000 & 0.969699 $\pm$ 0.012250 & 0.621590 $\pm$ 0.026005 \\
250,000 & 0.977754 $\pm$ 0.011862 & 0.610773 $\pm$ 0.025181 \\
500,000 & 0.977408 $\pm$ 0.010183 & 0.613801 $\pm$ 0.021617 \\
\hline
\end{tabular}
\end{table}
\vspace{-0.2cm}
\begin{table}[h!]
\centering
\caption{\textbf{$d$-dimensional Gaussian}: Dimensions 10--40 used }
\vspace{-0.3cm}
\label{tab:gauss_10_40}
\begin{tabular}{|c|c|c|}
\hline\rule[-1.2ex]{0pt}{4ex}
\# pts & $\alpha^{(n)}_{2,1}$ & $\beta^{(n)}_{2,1}$\\
\hline
\hline
100 & 0.693905 $\pm$ 0.024272 & 1.150159 $\pm$ 0.076637 \\
250 & 0.708339 $\pm$ 0.011373 & 1.169491 $\pm$ 0.035910 \\
500 & 0.703530 $\pm$ 0.014324 & 1.221910 $\pm$ 0.045228 \\
750 & 0.727740 $\pm$ 0.016855 & 1.165622 $\pm$ 0.053221 \\
1,000 & 0.732002 $\pm$ 0.014138 & 1.165663 $\pm$ 0.044642 \\
2,000 & 0.752371 $\pm$ 0.016913 & 1.131156 $\pm$ 0.053402 \\
2,500 & 0.762282 $\pm$ 0.012706 & 1.106974 $\pm$ 0.040117 \\
5,000 & 0.768990 $\pm$ 0.015844 & 1.109482 $\pm$ 0.050026 \\
10,000 & 0.782166 $\pm$ 0.015283 & 1.087864 $\pm$ 0.048257 \\
25,000 & 0.804136 $\pm$ 0.015535 & 1.040560 $\pm$ 0.049052 \\
50,000 & 0.810475 $\pm$ 0.014195 & 1.037841 $\pm$ 0.044821 \\
100,000 & 0.826284 $\pm$ 0.014664 & 0.998123 $\pm$ 0.046302 \\
250,000 & 0.841989 $\pm$ 0.015627 & 0.963708 $\pm$ 0.049343 \\
500,000 & 0.851244 $\pm$ 0.015557 & 0.945327 $\pm$ 0.049121 \\
\hline
\end{tabular}
\end{table}
\begin{figure}[h!]
    \centering

    \begin{subfigure}{\linewidth}
        \centering
        \includegraphics[width=\linewidth]{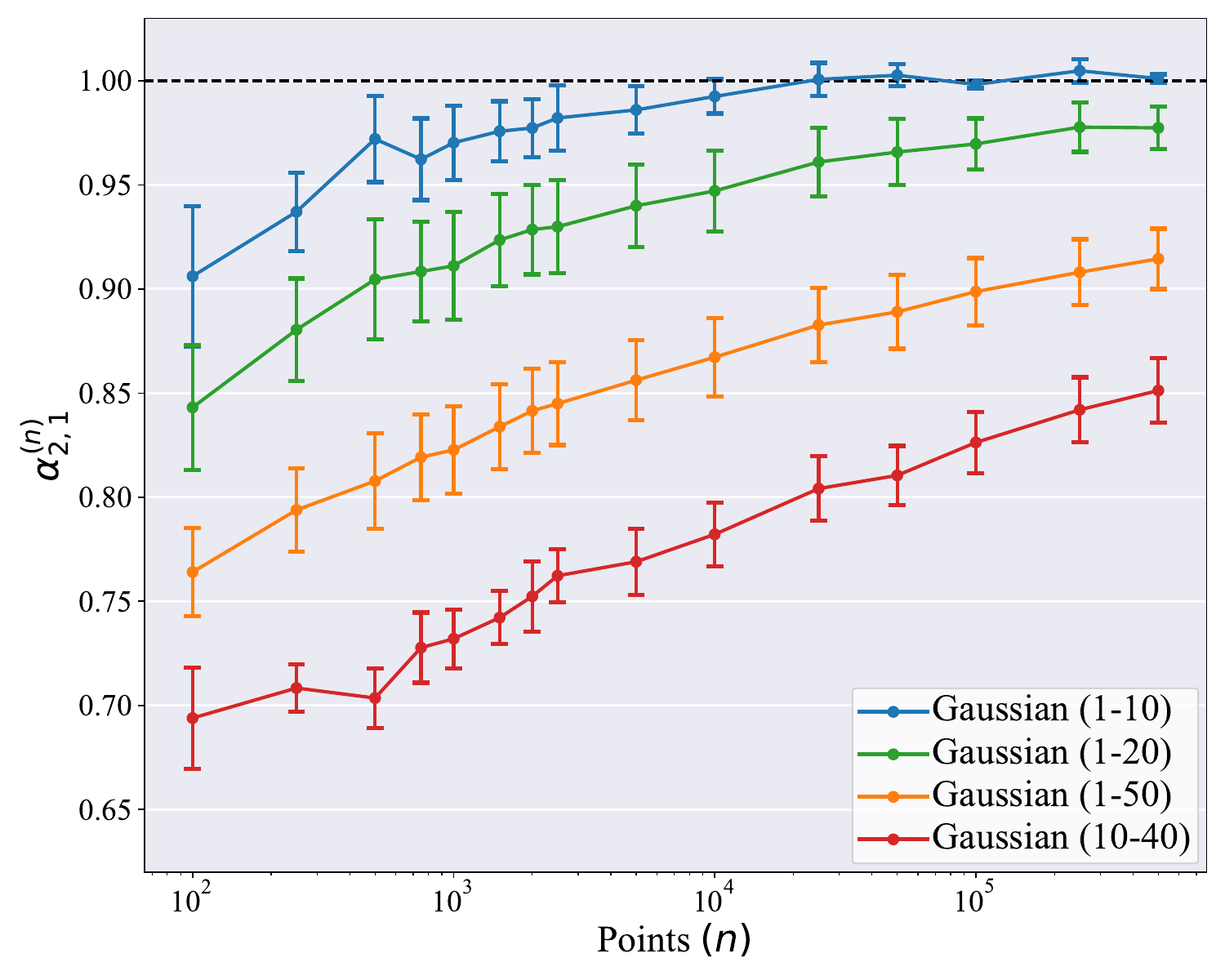}
        \caption{}
        \label{fig:gaussian_10_20_40_slopes}
    \end{subfigure}
    \vspace{-1em} 
    \begin{subfigure}{\linewidth}
        \centering
        \includegraphics[width=\linewidth]{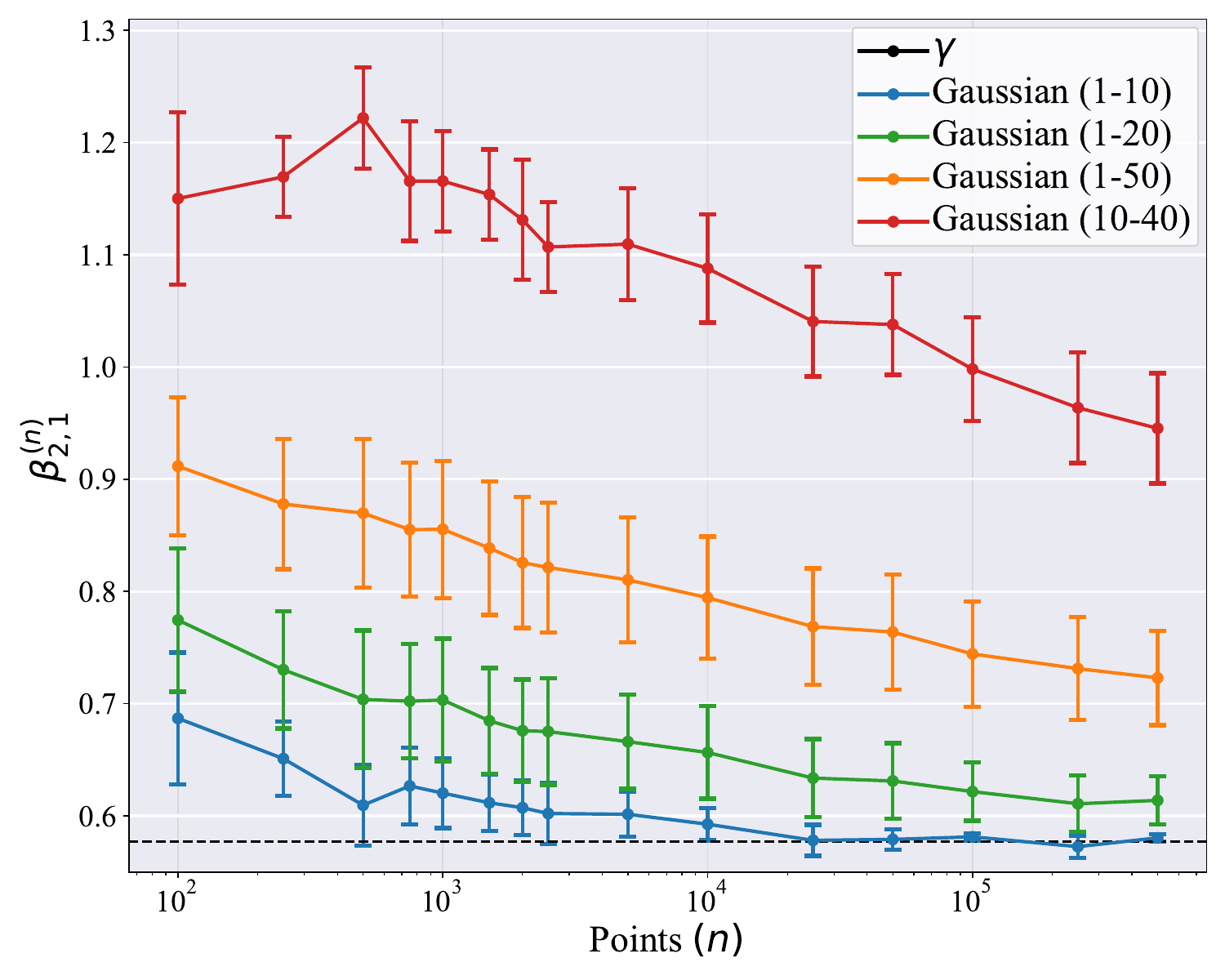}
        \caption{}
        \label{fig:gaussian_10_20_40_intercepts}
    \end{subfigure}
    \caption{A comparison of the estimated values using different dimensionality ranges for (a) $\alpha^{(n)}_{2,1}$ and (b) $\beta^{(n)}_{2,1}$  for different numbers of points $n$ sampling from a $d$-dimensional Gaussian distribution. The dotted black line indicates the theoretical limit value. The error bars represent the standard deviation.}
    \label{fig:gauss_diff_ranges}
\end{figure}

\begin{figure}[h!]
    \centering

    \begin{subfigure}{\linewidth}
        \centering
        \includegraphics[width=\linewidth]{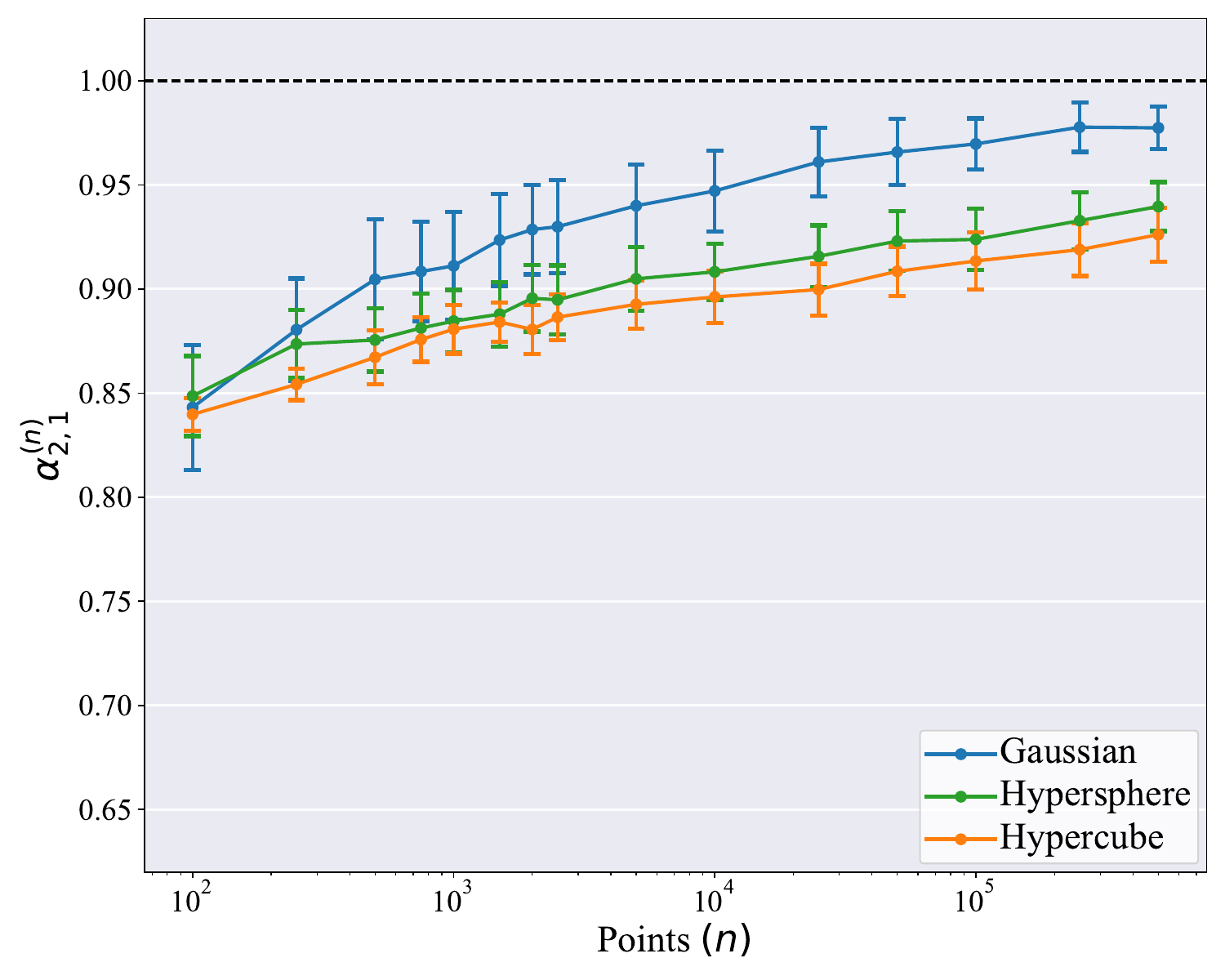}
        \caption{}
        \label{fig:comp_1_20_slope}
    \end{subfigure}
    \vspace{-1em} 
    \begin{subfigure}{\linewidth}
        \centering
        \includegraphics[width=\linewidth]{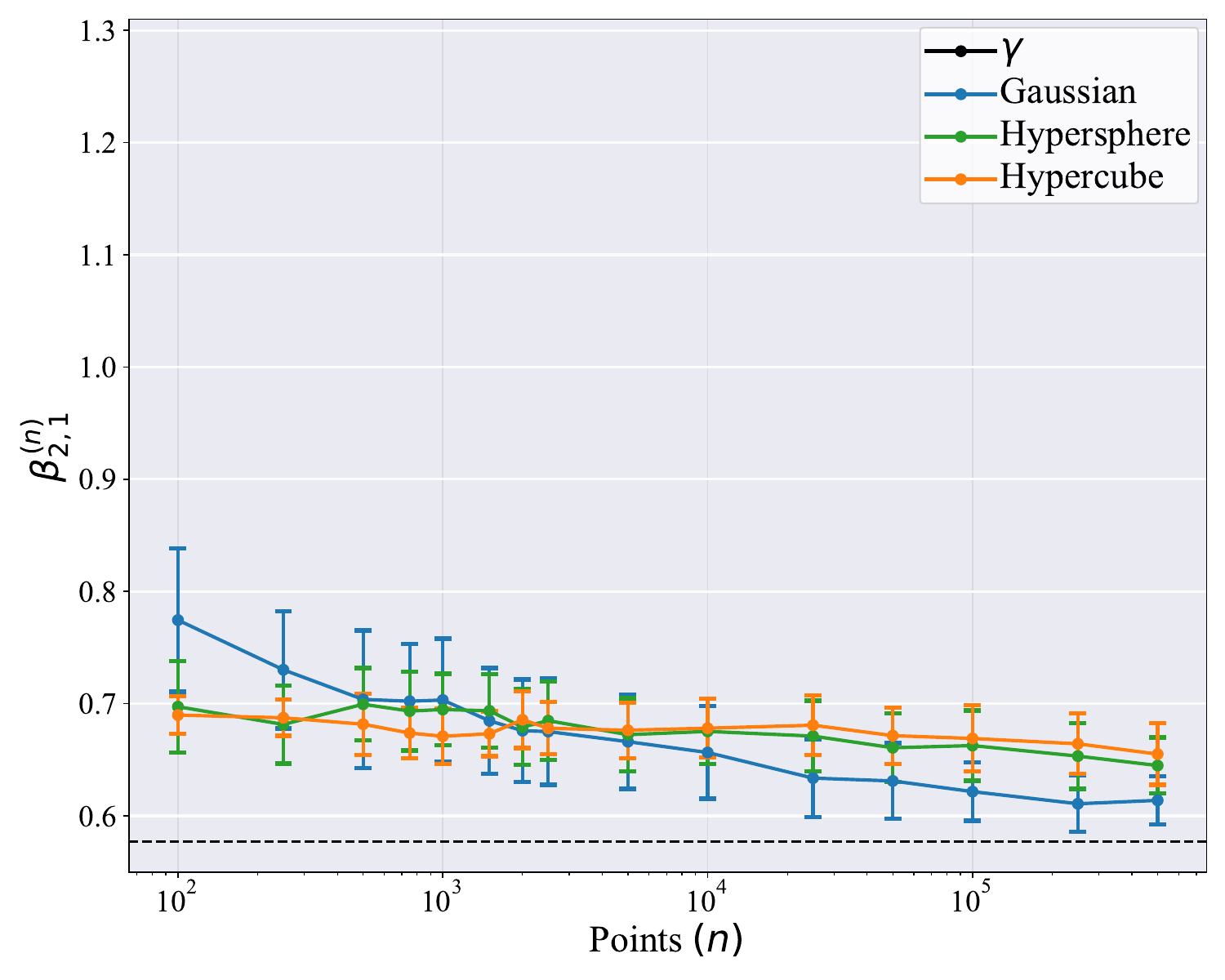}
        \caption{}
        \label{fig:comp_1_20_intercepts}
    \end{subfigure}
    \caption{A comparison of the estimated values using different sampling distributions (a) $\alpha^{(n)}_{2,1}$ and (b) $\beta^{(n)}_{2,1}$  for different numbers of points $n$ using dimensions up to $d=20$. The distributions shown are a $d$-dimensional Gaussian distribution, and uniform distributions on a $d$-dimensional hypercube and hypersphere. The dotted black line indicates the theoretical limit values. The error bars  represent  the standard deviation.}
    \label{fig:comp_1_20}
\end{figure}

\begin{figure}[h!]
    \centering

    \begin{subfigure}{\linewidth}
        \centering
        \includegraphics[width=\linewidth]{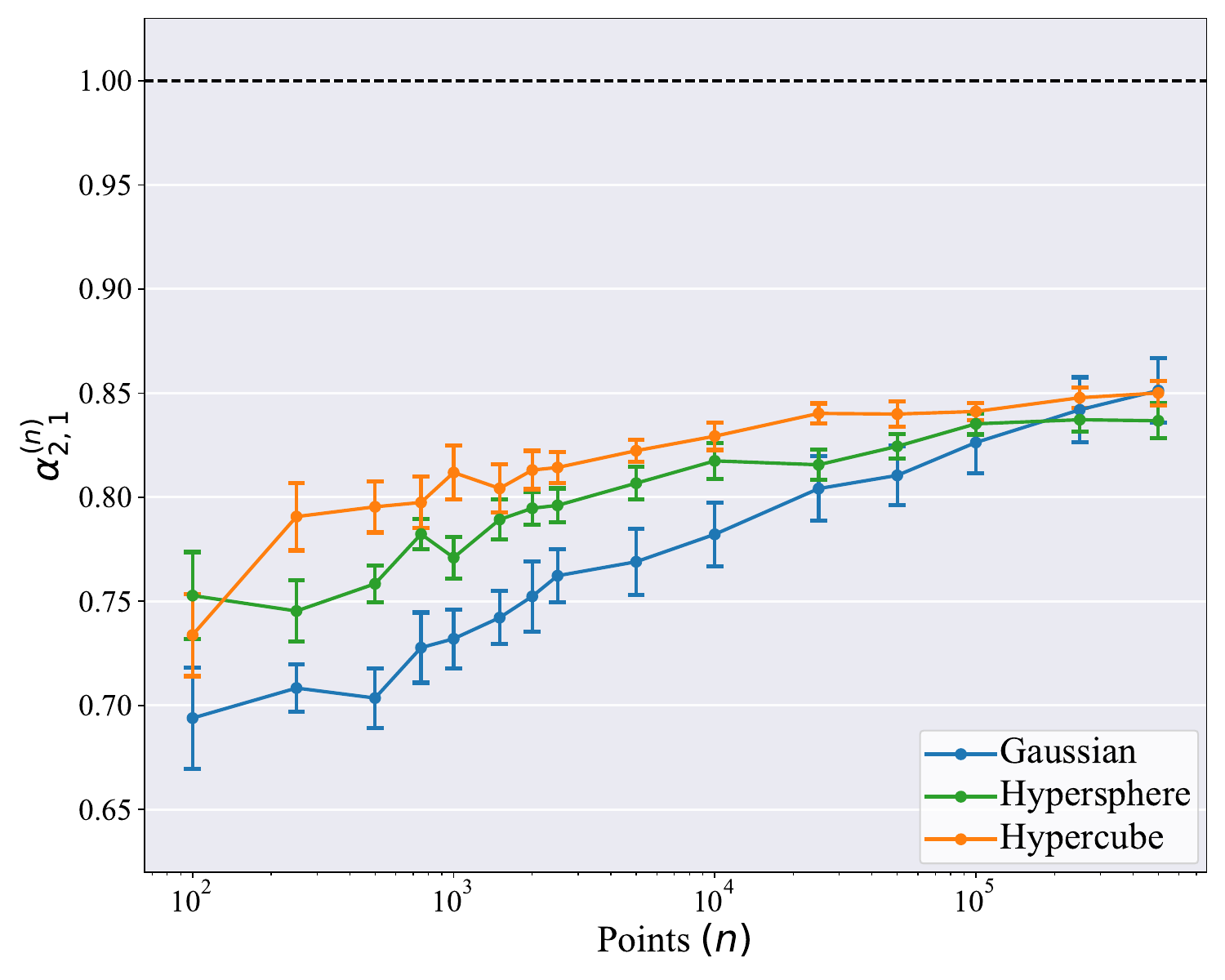}
        \caption{}
        \label{fig:comp_10_40_slope}
    \end{subfigure}
    \vspace{-1em} 
    \begin{subfigure}{\linewidth}
        \centering
        \includegraphics[width=\linewidth]{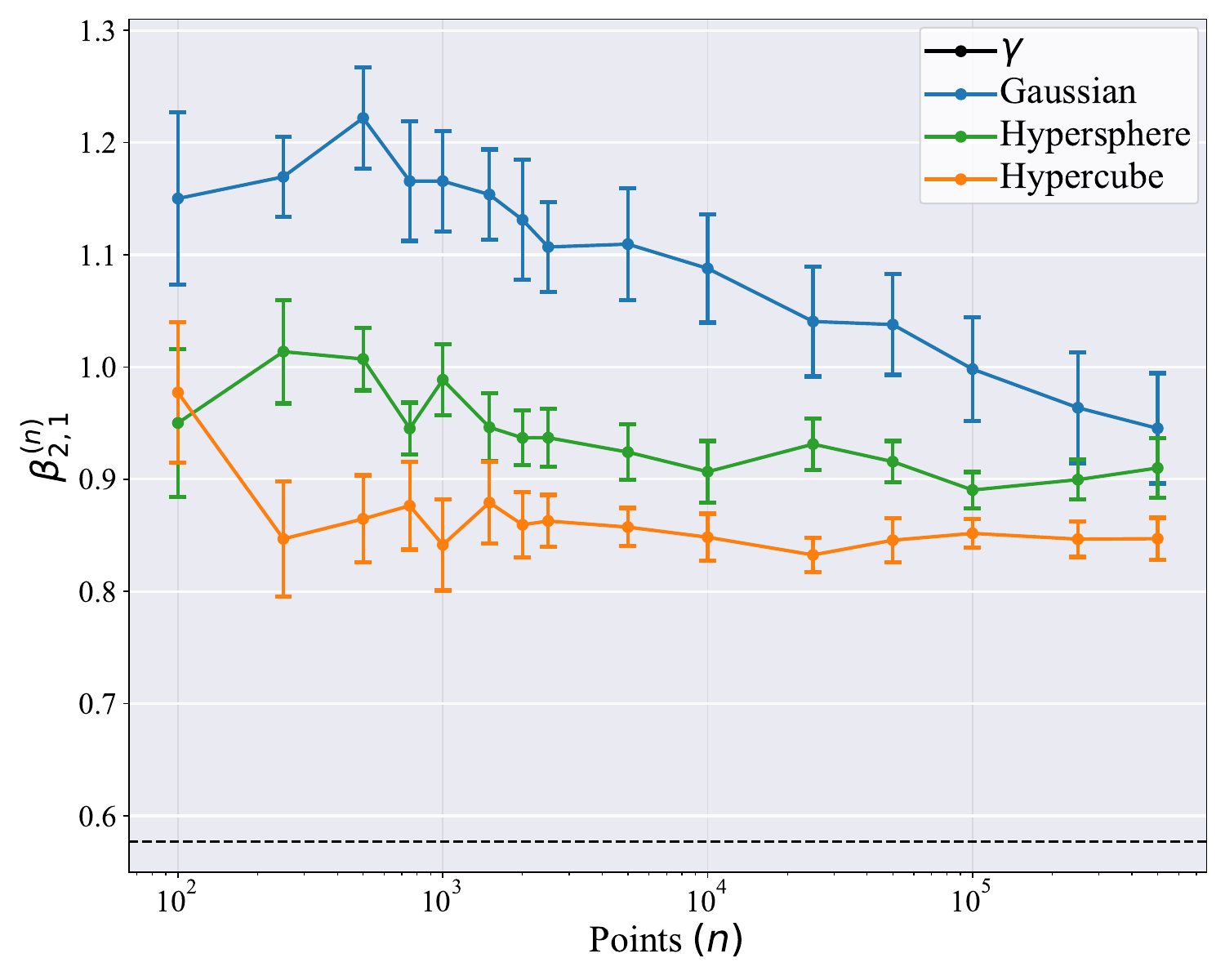}
        \caption{}
        \label{fig:comp_10_40_intercepts}
    \end{subfigure}
    \caption{A comparison of the estimated values using different sampling distributions (a) $\alpha^{(n)}_{2,1}$ and (b) $\beta^{(n)}_{2,1}$  for different numbers of points $n$ using samples from   dimensions 10--40. The distributions shown are a $d$-dimensional Gaussian distribution, and uniform distributions on a $d$-dimensional hypercube and hypersphere. The dotted black line indicates the theoretical limit values. The error bars represent the standard deviation.}
    \label{fig:comp_10_40}
\end{figure}
\bibliographystyle{plain}
    \bibliography{bibtex}

@book{penrose2003random,
  title={Random geometric graphs},
  author={Penrose, Mathew},
  volume={5},
  year={2003},
  publisher={OUP Oxford}
}

@article{bouligand1929notion,
  title={Sur la notion d’ordre de mesure d’un ensemble plan},
  author={Bouligand, G},
  journal={Bull. Sci. Math},
  volume={2},
  pages={185--192},
  year={1929}
}

@article{takens1981numerical,
  title={On the numerical determination of the dimension of an attractor},
  author={Takens, Floris},
  journal={Lect. Notes in Mathematics},
  volume={898},
  pages={366--381},
  year={1981},
  publisher={Springer-Verlag}
}

@article{grassberger1983characterization,
  title={Characterization of strange attractors},
  author={Grassberger, Peter and Procaccia, Itamar},
  journal={Physical review letters},
  volume={50},
  number={5},
  pages={346},
  year={1983},
  publisher={APS}
}

@article{carter2009local,
  title={On local intrinsic dimension estimation and its applications},
  author={Carter, Kevin M and Raich, Raviv and Hero III, Alfred O},
  journal={IEEE Transactions on Signal Processing},
  volume={58},
  number={2},
  pages={650--663},
  year={2009},
  publisher={IEEE}
}

@inproceedings{little2009estimation,
  title={Estimation of intrinsic dimensionality of samples from noisy low-dimensional manifolds in high dimensions with multiscale {SVD}},
  author={Little, Anna V and Lee, Jason and Jung, Yoon-Mo and Maggioni, Mauro},
  booktitle={2009 IEEE/SP 15th Workshop on Statistical Signal Processing},
  pages={85--88},
  year={2009},
  organization={IEEE}
}

@article{lung1988fractal,
  title={Fractal dimension measured with perimeter-area relation and toughness of materials},
  author={Lung, CW and Mu, ZQ},
  journal={Physical Review B},
  volume={38},
  number={16},
  pages={11781},
  year={1988},
  publisher={APS}
}

@article{zhou2021local,
  title={On local intrinsic dimensionality of deformation in complex materials},
  author={Zhou, Shuo and Tordesillas, Antoinette and Pouragha, Mehdi and Bailey, James and Bondell, Howard},
  journal={Scientific reports},
  volume={11},
  number={1},
  pages={10216},
  year={2021},
  publisher={Nature Publishing Group UK London}
}

@article{Facco,
author = {Facco, Elena and d’Errico, Maria and Rodriguez, Alex and Laio, Alessandro},
year = {2017},
month = {09},
pages = {},
title = {Estimating the intrinsic dimension of datasets by a minimal neighborhood information},
volume = {7},
journal = {Scientific Reports},
doi = {10.1038/s41598-017-11873-y}
}

@article{bobrowski2023universal,
  title={A universal null-distribution for topological data analysis},
  author={Bobrowski, Omer and Skraba, Primoz},
  journal={Scientific reports},
  volume={13},
  number={1},
  pages={12274},
  year={2023},
  publisher={Nature Publishing Group UK London}
}

@article{bobrowski2024universalityrandompersistenthomology,
  title={Universality in random persistent homology and scale-invariant functionals},
  author={Bobrowski, Omer and Skraba, Primoz},
  journal={arXiv preprint arXiv:2406.05553},
  year={2024}
}

@inproceedings{levina,
author = {Levina, Elizaveta and Bickel, Peter J.},
title = {Maximum Likelihood estimation of intrinsic dimension},
year = {2004},
booktitle = {Proceedings of the 18th International Conference on Neural Information Processing Systems},
pages = {777–784},
numpages = {8},
series = {NIPS'04}
}

@article{denti,
author = {Denti, Francesco and Doimo, Diego and Laio, Alessandro and Mira, Antonietta},
year = {2022},
month = {11},
pages = {20005},
title = {The generalized ratios intrinsic dimension estimator},
volume = {12},
journal = {Scientific Reports},
}

@article{Ceruti,
  title={{DANC}o: An intrinsic dimensionality estimator exploiting angle and norm concentration},
  author={Ceruti, Claudio and Bassis, Simone and Rozza, Alessandro and Lombardi, Gabriele and Casiraghi, Elena and Campadelli, Paola},
  journal={Pattern recognition},
  volume={47},
  number={8},
  pages={2569--2581},
  year={2014},
  publisher={Elsevier}
}

@article{Campadelli,
author = {Campadelli, P. and Casiraghi, E. and Ceruti, C. and Rozza, A.},
title = {Intrinsic Dimension Estimation: Relevant Techniques and a Benchmark Framework},
journal = {Mathematical Problems in Engineering},
volume = {2015},
number = {1},
pages = {759567},
year = {2015}
}

@ARTICLE{mnist,
  author={LeCun, Y. and Bottou, L. and Bengio, Y. and Haffner, P.},
  journal={Proceedings of the IEEE}, 
  title={Gradient-based learning applied to document recognition}, 
  year={1998},
  volume={86},
  number={11},
  pages={2278-2324},
}

@misc{isolet_54,
  author       = {Cole, Ron and Fanty, Mark},
  title        = {{ISOLET}},
  year         = {1991},
  howpublished = {UCI Machine Learning Repository},
  note         = {{DOI}: https://doi.org/10.24432/C51G69}
}

@article{costa2004geodesic,
  title={Geodesic entropic graphs for dimension and entropy estimation in manifold learning},
  author={Costa, Jose A and Hero, Alfred O},
  journal={IEEE Transactions on Signal Processing},
  volume={52},
  number={8},
  pages={2210--2221},
  year={2004},
  publisher={IEEE}
}

@INPROCEEDINGS{Costa,
  author={Costa, Jose A. and Hero, Alfred O.},
  booktitle={2004 12th European Signal Processing Conference}, 
  title={Learning intrinsic dimension and intrinsic entropy of high-dimensional datasets}, 
  year={2004},
  volume={},
  number={},
  pages={369-372},
  doi={}}

@article{ISOMAP,
author = {Joshua B. Tenenbaum  and Vin de Silva  and John C. Langford },
title = {A Global Geometric Framework for Nonlinear Dimensionality Reduction},
journal = {Science},
volume = {290},
number = {5500},
pages = {2319-2323},
year = {2000},
}

@article{Penrose2011LimitTF,
  title={Limit theory for point processes in manifolds},
  author={Mathew D. Penrose and Joseph E. Yukich},
  journal={Annals of Applied Probability},
  year={2013},
  volume={23},
  pages={2161-2211},
}

@article{whiteley2025statistical,
  title={Statistical Exploration of the Manifold Hypothesis},
  author={Whiteley, Nick and Gray, Annie and Rubin-Delanchy, Patrick},
  journal={Journal of the Royal Statistical Society: Series B},
  year={2025},
  publisher={Wiley-Blackwell}
}

@article{Fan2010IntrinsicDE,
  title={Intrinsic dimension estimation of data by principal component analysis},
  author={Mingyu Fan and Nannan Gu and Hong Qiao and Bo Zhang},
  journal={arXiv preprint arXiv:1002.2050},
  year={2010}
}

@article{binnie2025surveydimensionestimationmethods,
  title={A survey of dimension estimation methods},
  author={Binnie, James AD and D{\l}otko, Pawe{\l} and Harvey, John and Malinowski, Jakub and Yim, Ka Man},
  journal={arXiv preprint arXiv:2507.13887},
  year={2025}
}

@InProceedings{mindml,
author="Lombardi, Gabriele
and Rozza, Alessandro
and Ceruti, Claudio
and Casiraghi, Elena
and Campadelli, Paola",
editor="Gunopulos, Dimitrios
and Hofmann, Thomas
and Malerba, Donato
and Vazirgiannis, Michalis",
title="Minimum Neighbor Distance Estimators of Intrinsic Dimension",
booktitle="Machine Learning and Knowledge Discovery in Databases",
year="2011",
pages="374--389",
}

@article{coifman2006diffusion,
  title={Diffusion maps},
  author={Coifman, Ronald R and Lafon, St{\'e}phane},
  journal={Applied and computational harmonic analysis},
  volume={21},
  number={1},
  pages={5--30},
  year={2006},
  publisher={Elsevier}
}

@inbook{TLE,
author = {Laurent Amsaleg and Oussama Chelly and Michael E. Houle and Ken-ichi Kawarabayashi and Miloš Radovanović and Weeris Treeratanajaru},
title = {Intrinsic Dimensionality Estimation within Tight Localities},
booktitle = {Proceedings of the 2019 SIAM International Conference on Data Mining (SDM)},
pages = {181-189},
}

@inproceedings{PH,
author = {Birdal, Tolga and Lou, Aaron and Guibas, Leonidas and \c{S}im\c{s}ekli, Umut},
title = {Intrinsic dimension, persistent homology and generalization in neural networks},
year = {2021},
booktitle = {Proceedings of the 35th International Conference on Neural Information Processing Systems},
articleno = {519},
numpages = {14},
series = {NIPS '21}
}

@article{FisherS,
  title={Estimating the effective dimension of large biological datasets using Fisher separability analysis},
  author={Luca Albergante and Jonathan Bac and Andrei Yu. Zinovyev},
  journal={2019 International Joint Conference on Neural Networks (IJCNN)},
  year={2019},
  pages={1-8},
}

@ARTICLE{ESS,
  author={Johnsson, Kerstin and Soneson, Charlotte and Fontes, Magnus},
  journal={IEEE Transactions on Pattern Analysis and Machine Intelligence}, 
  title={Low Bias Local Intrinsic Dimension Estimation from Expected Simplex Skewness}, 
  year={2015},
  volume={37},
  number={1},
  pages={196-202},
}

@INPROCEEDINGS{knn,
  author={Costa, J.A. and Hero, A.O.},
  booktitle={The Thrity-Seventh Asilomar Conference on Signals, Systems and Computers, 2003}, 
  title={Entropic graphs for manifold learning}, 
  year={2003},
  volume={1},
  pages={316-320},
}

@InProceedings{WODCAP,
  title = 	 {{Dimensionality estimation without distances}},
  author = 	 {Kleindessner, Matthäus and Luxburg, Ulrike},
  booktitle = 	 {Proceedings of the Eighteenth International Conference on Artificial Intelligence and Statistics},
  pages = 	 {471--479},
  year = 	 {2015},
  volume = 	 {38},
  series = 	 {Proceedings of Machine Learning Research},
  publisher =    {PMLR},
}

@inproceedings{ConicalDim,
  title={Conical dimension as an intrisic dimension estimator and its applications},
  author={Yang, Xin and Michea, Sebastien and Zha, Hongyuan},
  booktitle={Proceedings of the 2007 SIAM International Conference on Data Mining},
  pages={169--179},
  year={2007},
  organization={SIAM}
}

@InProceedings{magnitude,
  title = 	 {Metric Space Magnitude and Generalisation in Neural Networks},
  author =       {Andreeva, Rayna and Limbeck, Katharina and Rieck, Bastian and Sarkar, Rik},
  booktitle = 	 {Proceedings of 2nd Annual Workshop on Topology, Algebra, and Geometry in Machine Learning (TAG-ML)},
  pages = 	 {242--253},
  year = 	 {2023},
  volume = 	 {221},
  series = 	 {Proceedings of Machine Learning Research},
  month = 	 {28 Jul},
  publisher =    {PMLR},
}

@ARTICLE{MDS,
  author={Sammon, J.W.},
  journal={IEEE Transactions on Computers}, 
  title={A Nonlinear Mapping for Data Structure Analysis}, 
  year={1969},
  volume={C-18},
  number={5},
  pages={401-409},
}

@Techreport{CIFAR100,
 author = {Krizhevsky, Alex and Hinton, Geoffrey},
 address = {Toronto, Ontario},
 institution = {University of Toronto},
 number = {0},
 publisher = {Technical report, University of Toronto},
 title = {Learning multiple layers of features from tiny images},
 year = {2009},
 title_with_no_special_chars = {Learning multiple layers of features from tiny images},
}

@inproceedings{Pope2021TheID,
title={The Intrinsic Dimension of Images and Its Impact on Learning},
author={Phil Pope and Chen Zhu and Ahmed Abdelkader and Micah Goldblum and Tom Goldstein},
booktitle={International Conference on Learning Representations},
year={2021},
}

@inproceedings{lorenz2023detecting,
  title={Detecting images generated by deep diffusion models using their local intrinsic dimensionality},
  author={Lorenz, Peter and Durall, Ricard L and Keuper, Janis},
  booktitle={Proceedings of the IEEE/CVF International Conference on Computer Vision},
  pages={448--459},
  year={2023}
}

@inproceedings{gamper2021multiple,
  title={Multiple instance captioning: Learning representations from histopathology textbooks and articles},
  author={Gamper, Jevgenij and Rajpoot, Nasir},
  booktitle={Proceedings of the IEEE/CVF conference on computer vision and pattern recognition},
  pages={16549--16559},
  year={2021}
}

@inproceedings{gong2019intrinsic,
  title={On the intrinsic dimensionality of image representations},
  author={Gong, Sixue and Boddeti, Vishnu Naresh and Jain, Anil K},
  booktitle={Proceedings of the IEEE/CVF Conference on Computer Vision and Pattern Recognition},
  pages={3987--3996},
  year={2019}
}

@article{Qiu_under,
title = {Underestimation modification for intrinsic dimension estimation},
journal = {Pattern Recognition},
volume = {140},
pages = {109580},
year = {2023},
author = {Haiquan Qiu and Youlong Yang and Hua Pan},
}

@article{Karkkainen,
title = {Additive autoencoder for dimension estimation},
journal = {Neurocomputing},
volume = {551},
pages = {126520},
year = {2023},
author = {Tommi Kärkkäinen and Jan Hänninen},
}

@inproceedings{Horvat,
 author = {Horvat, Christian and Pfister, Jean-Pascal},
 booktitle = {Advances in Neural Information Processing Systems},
 pages = {12225--12236},
 title = {Intrinsic dimensionality estimation using Normalizing Flows},
 volume = {35},
 year = {2022}
}

@inproceedings{Tempczyk,
  title={LIDL: Local Intrinsic Dimension Estimation Using Approximate Likelihood},
  author={Tempczyk, Piotr and Michaluk, Rafa{\l} and Garncarek, Lukasz and Spurek, Przemys{\l}aw and Tabor, Jacek and Golinski, Adam},
  booktitle={International Conference on Machine Learning},
  pages={21205--21231},
  year={2022},
  organization={PMLR}
}

\end{document}